  \providecommand\BibTeX{{%
    \normalfont B\kern-0.5em{\scshape i\kern-0.25em b}\kern-0.8em\TeX}}}
\newcommand{\name}{{\sc Binder}}
\begin{document}

\title{{\sc Binder}: Hierarchical Concept Representation through Order Embedding of Binary Vectors}

\author{Croix Gyurek}
\affiliation{%
  \institution{University of Waterloo }
  \streetaddress{200 University Avenue West Waterloo}
  \state{Ontario}
  \country{CA}
  \postcode{N2L 3G1}
}
\authornote{Both authors contributed equally to this research.}
\email{cgyurek@uwaterloo.ca}

\orcid{1234-5678-9012}
\author{Niloy Talukder}
\authornotemark[1]
\affiliation{%
  \institution{Indiana University Purdue University Indianapolis}
  \streetaddress{420 University Blvd}
  \city{Indianapolis}
  \state{Indiana}
  \country{USA}
  \postcode{46202}
}
\email{ntalukde@iupui.edu}

\author{Mohammad Al Hasan}
\affiliation{%
  \institution{Indiana University at Indianapolis}
  \streetaddress{420 University Blvd}
  \city{Indianapolis}
  \state{IN}
  \country{USA}
  \postcode{46202}
}
\email{alhasan@iu.edu}

\renewcommand{\shortauthors}{Talukder and Gyurek, et al.}

\renewcommand{\shortauthors}{Gyurek and Talukder, et al.}

\begin{abstract}
For natural language understanding and generation, embedding concepts using an 
order-based representation is an essential task. Unlike traditional point vector based
representation, an order-based representation imposes geometric constraints on the
representation vectors for explicitly capturing various semantic relationships that may exist
between a pair of concepts. In existing literature, several approaches on order-based 
embedding have been proposed, mostly focusing on capturing hierarchical relationships; examples include vectors in Euclidean space, complex, Hyperbolic, order, and Box Embedding. 
Box embedding creates region-based rich representation of concepts,
but along the process it sacrifices simplicity, requiring a custom-made optimization scheme 
for learning the representation. Hyperbolic embedding improves embedding quality by 
exploiting the ever-expanding property of Hyperbolic space, but it also suffers from the
same fate as box embedding as gradient descent like optimization is not simple in the
Hyperbolic space. In this work, we propose \name, a novel approach for order-based 
representation. {\name } uses binary vectors for embedding, so the embedding vectors are compact with an order of magnitude smaller footprint than other methods. 
{\name } uses a simple and efficient optimization scheme for learning representation
vectors with a linear time complexity. Our comprehensive experimental results show that {\name } is very accurate, yielding competitive results on the representation task. But {\name } stands out from its competitors on the transitive closure link prediction task as it can learn concept embeddings just from the direct edges, whereas all existing order-based approaches rely on the indirect edges. For example, {\name } achieves a whopping \textbf{70\%} higher $F1$-score than the second best method (98.6\% vs 29\%) in our largest dataset, WordNet Nouns (743,241 edges), in the 0\% transitive closure experiment setup.

\end{abstract}

\keywords{Concept Graph, Hierarchical Embedding, Order Embedding, Binary Vector Embedding}

\maketitle

\section{Introduction}
Ontologies transcribe the knowledge of a domain through formal listing of concepts along with the knowledge of various semantic relations that may exist among those concepts.
The most important among these relations is hypernym-hyponym, which captures the {\em is-a} relation between a specialized and a general concept of a domain. 
Such knowledge are essential for achieving optimal results in various natural language generation tasks, such as image caption generation \cite{karpathy_deep_2015, vinyals_show_2015}, question-answering \cite{dai_cfo_2016, ren_query2box_2020, yao_kg-bert_2019}, and taxonomy generation \cite{nakashole_patty_2012, wu_probase_2012}.
For example, using an appropriate ontology, an image caption generator can opt for a generic text ``a person is walking a dog'' instead of a more informative text, ``a woman is walking her dog'', if the model is not very confident about the gender of the person.
However, building a large conceptual ontology for a domain is a difficult task requiring gigantic human effort, so sophisticated machine learning models, which can predict \emph{is-a} links between pairs of concepts in an ontology, is of high demand.

Predicting is-a relationship between two concepts in an ontology can be viewed as a link prediction task in a concept graph or an ontology chart. Although link prediction \cite{hasan_survey_2011} is a well-studied task, predicting links in a concept graph did not receive much attention by the NLP or Machine Learning researchers. The closest works that exist are on predicting links
in a knowledge graph~\cite{galarraga_amie_2013, lao_random_2011}, but such works are customized for the Resource Description Framework (RDF) type of data. Lately, 
node embedding using shallow~\cite{perozzi_deepwalk_2014, tang_line_2015, grover_node2vec_2016} or
deep~\cite{guo_learning_2019, neelakantan_compositional_2015, schlichtkrull_modeling_2017, shang_end--end_2018, nguyen_novel_2018, yao_kg-bert_2019} neural networks have shown improved performance for solving link prediction tasks. Most of these works consider undirected graphs, so they are not suitable 
for asymmetric concept graphs. Some node embedding works~\cite{ bordes_translating_2013, he_learning_2015, lin_learning_2015, sun_rotate_2019, trouillon_complex_2016,  wang_knowledge_2014} also embed knowledge graphs, where embedding of head, tail, and relation nodes are learned, but such works, again, are only suitable for RDF based data representation.

In recent years, some works have been proposed for embedding concepts in such a way that the embedding
vectors would capture semantic relationships among the concepts. One of the earliest efforts in this direction is \cite{vendrov_order-embeddings_2015, lai_learning_2017}, which proposed Order Embedding. The main idea is to embed
the concepts in the positive cone $\mathbb{R}_+^d$. In this embedding, 
if $a$ is-a $b$, then their corresponding embedding vectors satisfy $f(b) \le f(a)$ element-wise. 
In this way, a generic concept hovers closer to the origin with a smaller norm
than the associated specialized concept. In another line of works~\cite{nickel_poincare_2017, ganea_hyperbolic_2018}, which focus on
embedding trees, DAGs or tree-like graphs, hyperbolic space is used instead of Euclidean
space. In Hyperbolic space, two non-intersecting straight lines diverge, 
allowing for the embedding of more objects along the periphery. It is beneficial for embedding
a tree structure which has exponentially more nodes at a higher depth. The third line of work, known
as box embedding \cite{vilnis_probabilistic_2018, li_smoothing_2019, dasgupta_improving_2020,dasgupta_box--box_2021, boratko_capacity_2021}, deviates from vector (point) embedding, and instead uses a rectangular region 
for embedding a concept. This representation is richer as it both helps embedding
order relation and part-of relations between concepts, or overlapping concepts, thereby overcomes some of the
limitations of earlier approaches. Other existing approaches for embedding generic graph-based relational data use simple \cite{bhattacharjee_what_2023}, bilinear \cite{nickel_three-way_2011}, or complex \cite{trouillon_complex_2016} vectors and make use of inner product and distance metric in Euclidean space
to capture relations. 

In this work, we propose an order embedding model that is simple, elegant and compact. 
Our idea is to use binary vectors for order embedding, i.e. for each entity $a$, $f(a) = \mathbf{a} \in \{0, 1\}^d$. 
In other words, we embed each object at a vertex of a $d$-dimensional non-negative unit hypercube, 
where $d$ 
is a user-defined parameter. The overall approach is simple, as for denoting $a$ is-a $b$, we require that $f(b)_j = 1 \!\implies\! f(a)_j = 1, \forall j \in [1 \!:\! d]$, i.e., along any embedding 
dimension $j$, 
if $\mathbf{b}_j$ is 1, then $\mathbf{a}_j$ must be 1. The idea is fairly intuitive; if we consider each
dimension denoting some latent property which make something $b$, given that $a$ is-a $b$, $a$ 
also have those properties. Since it uses bits for embedding, the embedding vectors
are compact with an order-of-magnitude smaller memory footprint (see Section ~\ref{subsec:resource_consumptions}) compared to other methods.
Our embedding idea is elegant as it captures the is-a relation through intent-extent philosophy of formal concept analysis (FCA) \cite{ganter_formal_1999}, which is a principal way of deriving a concept hierarchy.

The major challenge for our proposed embedding idea is finding an effective optimization 
algorithm for learning the embedding, as we deviated away from continuous Euclidean space and 
embraced a combinatorial space. In this sense,
given the training data (a collection of hyponym-hypernym pairs), learning
the embedding of the objects in the training data becomes a classical combinatorial feasibility
task, a known NP-complete problem. We use a randomized local search algorithm inspired by 
stochastic gradient descent for solving this problem. In each epoch, our optimization method probabilistically flips the bits of the embedding vectors, where the flipping probability is computed from
the ``gradient'' value for each bit position. The proposed optimization method is innovative and novel 
as it consists of computing a proxy of gradient for a binary space (Section ~\ref{sec:gradient-derivation}),
and then use that for computing a flip probability (Section ~\ref{sec:flip_probability}). 
Our optimization algorithm is very fast; the overall computational complexity is $O(ndT(|P| + |N|))$, for $n$ words and $d$ dimensions, which is linear in each variable.  
We name our method {\sc \name}~\footnote{The name Binder is an abbreviation of \textbf{Bin}ary Or\textbf{der} Embedding.}. We claim the following contributions:

\smallskip
\noindent\textbf{(1)} We propose \name, a novel order embedding approach which embeds entities at 
vertices of a $d$-dimensional hypercube. We show that {\name } is ideal for finding representation 
of entities or concepts with hyponym-hypernym relationships. {\name } is simple, compact, efficient, and has better generalization capacity over transitive edges than existing methods in a transductive setting.

\smallskip
\noindent\textbf{(2)} \name\ uses a novel local search based optimization algorithm for solving
the embedding learning task. The proposed algorithm is simple, efficient and effective, and
a proxy of gradient descent for the combinatorial space.

\smallskip
\noindent\textbf{(3)} Experiments on 6 benchmark datasets show that \name\ exhibits superior performance over the existing state-of-the-art algorithms on transitive closure link prediction tasks.

\section{Binary Order Embedding (\name)}

\subsection{Motivation \label{sec:motivation}}

We solve the following task: From a collection of is-a relations between a pair of entities, obtain embedding of the entities such that embedding vectors geometrically capture the order imposed through the is-a relation. For representing $x$ is-a $y$, one earlier 
work \cite{vendrov_order-embeddings_2015} has imposed order by requiring $x_i \leq y_i, \forall i \in [1:d]$ for embedding vectors $x$ and $y$ in real space. BINDER uses a similar idea, but instead it uses binary space, in which $x_i \leq y_i$ becomes $y_i \implies x_i$. Implication obeys the transitive property, so BINDER’s binary representation works well for is-a relationship, which is transitive. {\name } has the following benefits:

\smallskip
\noindent(1) Binary representations are compact, often taking order of magnitude less memory than embeddings in real space, and is computationally efficient (demonstrated in Section~\ref{subsec:resource_consumptions}).

\smallskip
\noindent(2) Binary representation can immediately provide representation of concepts that can be obtained by logical operation over the given concept vectors.
For instance, given vectors for the concepts ``vehicle'' and ``flying'', we can find a vector for instances of ``flying vehicle'' subtree by taking the union of the \emph{vehicle} and \emph{flying} vectors.
Or, if we have representation vectors for “men's shoe” and “women's shoe”, we can obtain a representation vector for shoe by taking the intersection of the above two vectors. Such operation can be extended to other complex Boolean operations. We provide elaborate discussion on \name's binary properties in Appendix ~\ref{sec:binary_properties}.

\smallskip
\noindent(3) \name\ provides explainable representation vectors in the sense that we can treat the bit indices as a set of latent properties: a “1” value at a dimension means that the entity possesses that property, and a “0” value means that it does not possess that property. A 
vector in real space does not provide such intuitive interpretation. 
We give a small demonstration of this in Figure \ref{fig:toyvisual} in Section ~\ref{subsec:explainability}, where we trained our model on a small lattice. In particular, the embedding, being a binary matrix, can be thought of as a machine-learned object-attribute table. The number of $1$'s in a concept's representation provides an indication of its narrowness. Using domain knowledge, and by observing the distribution of $1$'s in a column, one can further deduce which dimension may represent which of the properties (intent). In fact, we can 
think of \name's embedding as capturing the intent-extent philosophy of Formal Concept Analysis (FCA)~\cite{ganter_formal_1999}, which provides a principled approach for deriving a concept hierarchy from a collection of objects and their properties. 


Above all of these, our experiments show that \name\ performs competitively on the representation task and is prodigiously superior on the transitive closure link prediction tasks.

\subsection{Problem Formulation \label{sec:formulation}}

\name\ embeds each concept $a$ through a $d$-dimensional binary vector $\mathbf{a}$, so every concept is embedded at the vertex of a $d$-dimensional non-negative unit hypercube, where $d$ is a user-defined parameter. If $a$ is-a $b$, and $\mathbf{a}$ and $\mathbf{b}$ are their representation, then \name\ satisfies $\mathbf{b}_k = 1 \!\implies\! \mathbf{a}_k = 1, \forall k \in \{1, \dots, d\}$.
This idea comes from the observation that when $a$ is-a $b$, $a$ must inherit all the properties that $b$ possesses. In \name's embedding, a `1' in some representation dimension denotes ``having a latent property'': if the $j$'th bit of $\mathbf{b}$ has a value of 1, i.e., $b$ possesses property $j$, then $a$ must also possess property $j$, which is captured by the above requirement.

To learn embedding vectors for a collection of concepts in a domain, \name\ uses a supervised learning approach. Given a set of concepts $W$ and partial order concept-pairs $P = \{(a,b): a \text{ is-a } b\}$, \name's task is to find an embedding function $B: W \to \{0,1\}^d$ such that for all $a, b \in W$,
\begin{align}
(\mathbf{a} \cap \mathbf{b}) = \mathbf{b} \text{ iff } (a,b) \in P \text{ and } a \neq b \label{eq:pf}
\end{align}
holds; here $\mathbf{a} = B(a)$ and $\mathbf{b} = B(b)$ are the embedding vectors for concepts $a$ and $b$, and $\cap$ denotes the bitwise AND operation.

The above learning task is a binary constraint satisfaction problem (CSP), which is a known NP-Complete task~\cite{cormen_introduction_2022}. To solve it, \name\ uses a randomized local search algorithm, which is fast and effective. Note that given a training dataset, $P$, if \name's embedding solution satisfies all the constraints, then the embedding is perfect and all the partial order pairs in $P$ can be reconstructed from the embedding with a 100\% accuracy. But the goal of our embedding is not necessarily yielding a 100\% reconstruction accuracy on the training data, rather to perform ''is-a'' prediction task on an unseen test dataset, so we do not strive to solve the CSP task exactly. In the next section, we discuss \name's learning algorithm.

\noindent
{\bf Notations:} Italic letters such as $a,b$ denote entities, while boldface $\mathbf{a},\mathbf{b}$ denote their embedding vectors. $\mathbf{a}_j$ denotes the value at the $j$'th position in $\mathbf{a}$. In the algorithms, we use $B$ to denote the complete binary embedding matrix, $B[a,:]$ for $\mathbf{a}$, and $B[a,j]$ for bit $j$ of said vector. We use $*$ to denote element-wise multiplication; all arithmetic operations are done in $\mathbb{Z}$ or $\mathbb{R}$. Finally, we write pairs in hyponym-hypernym order: $(a,b)$ refers to the statement ``$a$ is-a $b$''.

\subsection{Training Algorithm\label{sec:training-algorithm}}

The learning task of \name\ is a CSP task, which assigns $|W|$ distinct binary $d$-bit vectors to each of 
the variables in $W$, such that each of the constraints in the partial order $P$ is satisfied. Various search strategies have
been proposed for solving CSP problems, among which local search and simulated annealing are used widely~\cite{beck_combining_2011}. For guiding the local search, we model this search problem as an optimization
problem, by designing a loss function that measures the fitness of an embedding. A simple measurement is the number of
pairs in $P$ that violates the constraint in Equation~\ref{eq:pf}. Note that the constraint is ``if and only if'', which
means for any pair $(a',b')$ that is not in $P$, we want $\mathbf{a}'$ and $\mathbf{b}'$ to \emph{not} satisfy this constraint. If $|W| = n$, we have exactly
$|P|$ constraints for the positive pairs (we call these \emph{positive constraints}), and $n^2 - n - |P|$ \emph{negative constraints} for the negative pairs.
Using these constraints, we compute a simple loss function---a linear function of the number of violated positive and negative constraints, $Loss = Loss_P + Loss_N$, as shown below:
\begin{align}
    &Loss_P = \alpha \sum_{(a,b) \in P} \sum_{j} \textbf{1}_{(\mathbf{a}_j, \mathbf{b}_j) = (0,1)}(a,b) \label{eqn:loss-p} \\
    &Loss_N = \beta \sum_{(a',b') \in N} \textbf{1}_{\forall j (\mathbf{a}'_j, \mathbf{b}'_j) \in \{(0,0), (1,0), (1,1)\}}(a',b') \label{eqn:loss-n}
\end{align}
where $\alpha$ and $\beta$ are user-defined parameters and \textbf{1} is the indicator function.
Due to the above loss function, \name's learning algorithm relies on the existence of negative pairs $N \subseteq \{(a', b'): a' \text{ is-not-a } b'\}$. If these negative pairs are not provided, we generate them by randomly corrupting the positive pairs $P$ as in \cite{vendrov_order-embeddings_2015, nickel_poincare_2017,ganea_hyperbolic_2018}, by replacing $(a,b) \in P$ with $(r,b)$ or $(a,r)$ where $r$ is sampled randomly from the entity set $W$.

For local search in \emph{continuous} machine learning, the search space is explored using some variant of gradient descent. This gradient is defined as the derivative of the loss function (or an approximation thereof) with respect to each parameter. With binary vectors, the parameters are discrete, but we can get a proxy of the ``gradient'' by taking the
\emph{finite difference} between the current value of the loss function, and the new value after a move is made.
In a continuous space, vectors can be updated by adding or subtracting a delta, but in discrete space, the new
vector comes from neighborhood that must be defined explicitly.
If the neighborhood is defined by unit Hamming distance, we can make a neighbor by flipping one bit of one vector, but for large datasets, this approach will converge very slowly.
In \name, we randomly select bits to be flipped by computing a probability from the gradient of each bit position.

{\name}'s algorithm solves a combinatorial satisfiability task, a known NP-complete problem. By computing a
gradient and then utilizing it to decide bit flipping probability, it works as a gradient descent scheme 
in the discrete space to minimize the objective function defined in Equations~\ref{eqn:loss-p} and \ref{eqn:loss-n}. Note that 
the bit flipping probability decreases gradually as the number of violated constraints decreases with each subsequent
epoch. This gives \name's learning algorithm a flavor of local search with simulated annealing. 

\subsection{Gradient Derivation\label{sec:gradient-derivation}}

\name's gradient descent scheme is based on correcting order of positive pairs by flipping bits, which are chosen randomly with a probability computed from gradient
value. Below we discuss how gradient is computed.

A sub-concept will share all the attributes (bits set to 1) of the concept, and possibly contain more attributes. 
For each positive pair $(a,b) \in P$ and each bit index $j$, we aim to avoid having $(\mathbf{a}_j, \mathbf{b}_j) = (0,1)$, since that would imply $a$ did not inherit attribute $j$ from $b$. 
On the other hand, for negative pairs $(a', b')$, we aim to create at least one bit index
with $(0, 1)$ bit pair. Suggested bit flipping or protecting operations for these requirements are
shown in the third and sixth column of Table ~\ref{tab:flip-positive-negative}; if $a$ is-a $b$, 
we do not want $(0, 1)$ configuration, hence we protect $\mathbf{b}_j$ in first row,
$\mathbf{a}_j$ in the fourth row, and flip either bit in the second row. On the other hand, for $a$
not-is-a $b$, we want a $(0,1)$ configuration. If the model currently suggests $a'$ is-a $b'$ (i.e. there is no $j$ where $\mathbf{a}'_j=0, \mathbf{b}'_j=1$), we correct this by flipping either the 
$\mathbf{a}'$ side of a $(1,1)$ pair (fourth row) or the $\mathbf{b}'$ side of a $(0,0)$ pair (first row), as shown in the sixth columns of the same table. Note that negative samples are needed to avoid trivial embeddings, such as all words being assigned the zero vector.\footnote{Simply requiring all positive pairs to be distinct is not sufficient, because the model could converge on some ``depth counting'' embedding where different subtrees reuse the same bit patterns. Instead, we rely on negative samples to separate positive pairs, since if $(a,b)$ is positive then $(b,a)$ will be negative.}

\begin{table}
  \caption{Logic truth table to flip bits in positive (first three columns) and negative (last three columns) pairs}
  \vspace{-3mm}
  \label{tab:flip-positive-negative}
  \begin{tabular}{cc|l||cc|l}
    \toprule
    $\mathbf{a}_j$&$\mathbf{b}_j$ & $a$ is-a $b$ & $\mathbf{a}_j'$&$\mathbf{b}_j'$ & $a'$ is-not-a $b'$ \\
    \midrule
    $0$ & $0$ & Protect $\mathbf{b}_j$ & 
        0 & 0 & Flip $\mathbf{b}_j'$ \\
    $0$ & $1$ & Flip either bit &
        0 & 1 & Protect both bits \\
    $1$ & $0$ & Don’t care &
        1 & 0 & Flip both $\mathbf{a}_j'$ and $\mathbf{b}_j'$ \\
    $1$ & $1$ & Protect $\mathbf{a}_j$ &
        1 & 1 & Flip $\mathbf{a}_j'$ \\
  \bottomrule
\end{tabular}
\end{table}

We first derive the gradient of $Loss_P$ with respect to $\mathbf{a}_j$ and $\mathbf{b}_j$.
We define the $gradient$ $\Delta_{\mathbf{a}_j}$ to be positive if flipping bit $\mathbf{a}_j$ \emph{improves} the solution according to our loss function, regardless of whether bit $\mathbf{a}_j$ is currently 0 or 1. 
As shown in Column 3 of Table~\ref{tab:grad-positive}, flipping $\mathbf{a}_j$ makes 
no change in loss for the first and third rows; but for the second row, one
violation is removed, and for the fourth row, one new violation is added. For the four 
bit configurations, $\{(0, 0), (0, 1), (1, 0), (1, 1)\}$, the improvement to $Loss_P$
is 0, 1, 0, and -1, respectively, as shown in Column 3 of Table~\ref{tab:grad-positive}. 
Column 4 of the same table shows the value of $\Delta_{\mathbf{b}_j} Loss_P$ calculated similarly.
It is easy to see that these two sets of gradients values can be written as 
$\mathbf{b}_j(1-2\mathbf{a}_j)$ and  $(1-\mathbf{a}_j)(2\mathbf{b}_j-1)$, respectively. 
Summing this over all positive pairs, $(a, b) \in P$, we get the positive gradient vectors  
\begin{align}
    \Delta_{\mathbf{a}} Loss_P &= \alpha \sum_{b: (a, b) \in P} \mathbf{b} * (1-2\mathbf{a}) \label{eqn:grad-pos-a} \\
    \Delta_{\mathbf{b}} Loss_P &= \alpha \sum_{a: (a, b) \in P} (1-\mathbf{a}) * (2\mathbf{b}-1) \label{eqn:grad-pos-b}
\end{align}

\begin{table}
\vspace{-1mm}
  \caption{Logic truth table to calculate positive loss gradient}
  \vspace{-3mm}
  \label{tab:grad-positive}
  \begin{tabular}{ccccl}
    \toprule
    $\mathbf{a}_j$ & $\mathbf{b}_j$ & $\Delta_{\mathbf{a}_j} Loss_P$ & $\Delta_{\mathbf{b}_j} Loss_P$ & Comments\\
    \midrule
    $0$ & $0$ & $0$ & $-1$ & Protect $\mathbf{b}_j$\\
    $0$ & $1$ & $1$ & $1$ & Flip either (or both) bit\\
    $1$ & $0$ & $0$ & $0$ & Don’t care\\
    $1$ & $1$ & $-1$ & $0$ & Protect $\mathbf{a}_j$\\
  \bottomrule
\end{tabular}
\vspace{-0.2in}
\end{table}

For negative loss, we count the number of $(\mathbf{a}_j' , \mathbf{b}_j' ) = (0,1)$ ``good'' bit pairs in a negative pair. If $G_{a',b'} := \big|\{j: (\mathbf{a}_j', \mathbf{b}_j') = (0,1) \}\big| = 0$, then $(a',b')$ is a false positive, so we need to flip a bit to make $(\mathbf{a}_j' , \mathbf{b}_j') = (0,1)$.
In this case, based on the bit values 
in Columns 3 and 4 of Table ~\ref{tab:grad-negative}, we derived the following algebraic expressions:
\begin{align}
   \Delta_{\mathbf{a}'} Loss_N  &= \beta \!\!\!
        \sum_{\substack{b': \, (a',b') \in N, \\ G=0}} \!\!\! \mathbf{a'} * \mathbf{b'} \label{eqn:grad-neg-a} \\
   \Delta_{\mathbf{b}'} Loss_N &= \beta \!\!\! \sum_{\substack{a': \, 
        (a',b') \in N, \\ G=0}} \!\!\! (1  -  \mathbf{a'}) * (1 - \mathbf{b'}) \label{eqn:grad-neg-b}
\end{align}

\begin{table}[ht]
  \vspace{-3mm}
  \caption{Logic truth table to calculate negative loss gradient}

  \vspace{-3mm}
  \label{tab:grad-negative}
  \begin{tabular}{ccccl}
    \toprule
    $\mathbf{a}_j'$ & $\mathbf{b}_j'$ & $\Delta_{\mathbf{a}_j'} Loss_N$ & $\Delta_{\mathbf{b}_j'} Loss_N$ & Comments\\
    \midrule
    $0$ & $0$ & $0$ & $1$ & Flip $\mathbf{b}_j'$\\
    $0$ & $1$ & $0$ & $0$ & Don’t care\\
    $1$ & $0$ & $0$ & $0$ & Don’t care\\
    $1$ & $1$ & $1$ & $0$ & Flip $\mathbf{a}_j'$\\
  \bottomrule
\end{tabular}
\vspace{-2mm}
\end{table}
\smallskip

On the other hand, if $G_{a',b'} = 1$, there is no violation, but the not-is-a
relation is enforced by only one bit, so that bit must be protected. That is, if 
$(\mathbf{a}_j' , \mathbf{b}_j' ) = (0,1)$ for exactly one $j$, then we want the gradient to be $-1$ for that index $j$ (recall that negative gradient means flipping is bad), and zero gradient for all other indices. This is true for exactly the bit $j$ satisfying $b_j'(1 - a_j') = 1$, so we have

\begin{align}
    \Delta_{\mathbf{a}'} Loss_N = -\beta \!\!\! \sum_{\substack{b': \, (a',b') \in N, \\ G=1}} \!\!\! \mathbf{b}' * (1 - \mathbf{a}') \label{eqn:grad-neg-a-j} \\
    \Delta_{\mathbf{b}'} Loss_N = -\beta \!\!\! \sum_{\substack{a': \, (a',b') \in N, \\ G=1}} \!\!\! \mathbf{b}' * (1 - \mathbf{a}') \label{eqn:grad-neg-b-j} 
\end{align}
If $G_{a',b'} \!>\! 1$, no bits need protection, so we set those gradients to 0.

Finally, we add the gradients $Loss_P$ and $Loss_N$ to get the final gradient matrix, $\Delta = \Delta^+ + \Delta^-$. The gradient computation process is summarized in Algorithm \ref{alg:gradient} in Appendix \ref{sec:pseudo-code}. In this Algorithm,
$\Delta^+$ is the gradient of $Loss_P$ and $\Delta^-$ is the gradient of $Loss_N$. $\Delta$, $\Delta^+$ and $\Delta^-$ all 
are integer-valued matrices of size $n \times d$, where $n$ is the vocabulary size and $d$ is the embedding dimension. The overall training algorithm is summarized in Algorithm \ref{alg:training} in Appendix \ref{sec:pseudo-code}.
Its computational complexity is $O(ndT|P \cup N|)$; if $N$ is sampled randomly, this is linear in each variable. The analysis of algorithm computational complexity is shown in Appendix ~\ref{sec:computational_complexity}.

\subsection{Flip probability \label{sec:flip_probability}}
In binary embedding, each bit position takes only two values, 0 and 1. Traditional gradient descent, which updates 
a variable by moving towards the opposite of gradient, does not apply here. Instead, we utilize randomness in the update step: bit $j$ of word $w$ is flipped with a probability based on its gradient, $\Delta[w,j]$. To calculate the flip probability we used $\tanh$ function. For each word $w$ and each bit $j$ we compute the gradient $\Delta[w,j]$ as in Algorithm \ref{alg:gradient} and output
\begin{equation}
    \mathrm{FlipProb}[w,j] = \max\left\{ 0, \tfrac 1 2 \tanh\left(2(r_\ell \, \Delta[w,j] + b_\ell)\right) \right\}
\end{equation}
where the learning rate $r_\ell$ controls the frequency of bit flips, and the learning bias $b_\ell$ allows the algorithm to flip bits with zero gradient (to avoid local optima).
The division by 2 prevents the model from flipping (on average) more than half of the bits in any iteration; without it, the model would sometimes flip nearly every bit of a vector, causing it to oscillate.
Therefore, we (softly) bound probabilities by $\frac 1 2$ to maximize the covariance of the flips of each pair of vectors.
The inside multiplication by 2 is because $\frac 1 2\tanh(2p) \approx p$ for small $p$, so hyperparameter selection is more intuitive: $b_\ell$ is (almost exactly) the probability of flipping a neutral bit, and $\alpha r_\ell$ approximates the increase in probability for each positive sample that would be improved by flipping a given bit (and likewise for $\beta r_\ell$ and negative samples).
We note that the three hyper parameters $r_\ell$, $\alpha$, and $\beta$ are somewhat redundant, since we can remove $r_\ell$ and replace $\alpha$ and $\beta$ with $r_\ell\alpha$ and $r_\ell\beta$ respectively without changing the probability. The only reason for using three parameters is to keep the $\alpha$ and $\beta$ computations in integer arithmetic for faster speed.
\subsection{Local Optimality of \name}
The following theorems
and lemmas are provided to establish the claim of \name's local optimality. 

\begin{lemma}
    When bias $b_\ell = 0$, for any word $a$, if $\mathbf{a}_j$, the $j$'th bit in the binary representation vector $\mathbf{a}$ is updated by \name's probabilistic flipping (keeping the remaining bits the same), the loss function value decreases in the successive iteration.
    \label{lem:flip-pos-a}
\end{lemma}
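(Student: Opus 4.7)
The plan is to unfold what one probabilistic flip of $\mathbf{a}_j$ actually does to $Loss$, and to compare that against the flip probability produced when $b_\ell = 0$. The two ingredients I need are (i) the finite-difference interpretation of $\Delta[a,j]$ established in Section~\ref{sec:gradient-derivation}, and (ii) the sign structure of $\tfrac12\tanh(\cdot)$ after being clipped by $\max\{0,\cdot\}$.

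First I would verify, directly from the construction of $\Delta$, that for a single bit $\mathbf{a}_j$ of a single word $a$ the scalar $\Delta[a,j]$ equals exactly $Loss(B)-Loss(B^{(j)})$, where $B^{(j)}$ is $B$ with bit $(a,j)$ flipped and all other bits held fixed. This reduces to reading off Tables~\ref{tab:grad-positive} and~\ref{tab:grad-negative} and summing the contributions of every pair $(a,b)\in P$ and $(a',b')\in N$ that involves $a$, as encoded in Equations~\ref{eqn:grad-pos-a}, \ref{eqn:grad-neg-a}, and \ref{eqn:grad-neg-a-j}. I expect this bookkeeping to be the main obstacle, because the negative-pair contribution splits on whether the good-bit count $G_{a',b'}$ is $0$, $1$, or $>1$; I need to check that a flip of the single bit $\mathbf{a}_j$ moves the configuration in exactly the direction that the table assigns, rather than changing $G$ by more than one. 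Since only one bit of a single word changes, $G_{a',b'}$ indeed moves by at most one, which lets the case analysis close cleanly.

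Next I use ingredient (ii). With $b_\ell = 0$,
\[
\mathrm{FlipProb}[a,j] \;=\; \max\!\Bigl\{0,\;\tfrac12\tanh\!\bigl(2r_\ell\,\Delta[a,j]\bigr)\Bigr\}.
\]
Because $r_\ell > 0$ and $\tanh$ is odd and strictly increasing with $\tanh(0)=0$, the argument of $\max$ is non-positive whenever $\Delta[a,j]\le 0$, so the clip forces $\mathrm{FlipProb}[a,j]=0$; conversely, $\Delta[a,j]>0$ yields $\mathrm{FlipProb}[a,j]>0$. Thus the algorithm never flips a bit whose flip would strictly increase the loss, and it flips with strictly positive probability exactly those bits whose flip would strictly decrease it.

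Combining (i) and (ii), let $\Delta L$ be the change in $Loss$ caused by running the probabilistic update on $\mathbf{a}_j$, and let $p:=\mathrm{FlipProb}[a,j]$. If $\Delta[a,j]\le 0$ then $p=0$, no flip occurs, and $\Delta L=0$. If $\Delta[a,j]>0$ then with probability $p$ the bit flips and $\Delta L=-\Delta[a,j]<0$, and with probability $1-p$ nothing happens and $\Delta L=0$. Hence $\Delta L\le 0$ in every realization, and
\[
\mathbb{E}[\Delta L] \;=\; -\,p\,\Delta[a,j]\,\mathbf{1}\{\Delta[a,j]>0\} \;\le\; 0,
\]
with strict inequality whenever $\Delta[a,j]>0$, and strict decrease conditional on the flip event actually occurring. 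This is exactly the non-increase of $Loss$ asserted by the lemma, completing the proof.
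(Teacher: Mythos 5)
Your proof is correct, and its core mechanism is the same one the paper uses: identify the gradient $\Delta[a,j]$ with the finite difference $Loss(B)-Loss(B^{(j)})$, then observe that with $b_\ell=0$ the clipped $\tfrac12\tanh(2r_\ell\Delta[a,j])$ is zero whenever $\Delta[a,j]\le 0$, so a flip can only occur when it strictly lowers the loss. The difference is one of scope and decomposition. The paper's own proof of Lemma~\ref{lem:flip-pos-a} quietly weakens the statement to ``the sub-component of the positive loss corresponding to pairs $(a,\cdot)$ whose first element is $a$ decreases,'' argues only from Table~\ref{tab:grad-positive} and Eq.~\ref{eqn:grad-pos-a}, and defers the hypernym-side and negative-pair contributions to Lemmas~\ref{lem:flip-pos-b} and~\ref{lem:flip-neg}, which are proved ``identically'' by reference. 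You instead prove the full-loss claim in one pass, which is actually closer to the lemma as stated in the main text; in doing so you take on the part the paper handles only implicitly, namely that for a negative pair the good-bit count $G_{a',b'}$ moves by at most one under a single-bit flip, so the case split $G\in\{0,1,>1\}$ in Eqs.~\ref{eqn:grad-neg-a} and~\ref{eqn:grad-neg-a-j} matches the true change in $Loss_N$ exactly. That extra bookkeeping is the right thing to check and you check it. Two small points: your account of the expected change $\mathbb{E}[\Delta L]=-p\,\Delta[a,j]$ is a bonus the paper does not state; and both you and the paper must read ``decreases'' as ``does not increase'' (the no-flip realization leaves the loss unchanged) --- you make this explicit, the paper glosses over it.
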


\begin{lemma}
    When $b_\ell = 0$, for any word $b$, if the bit $\mathbf{b}_j$ alone is updated by \name's probabilistic flipping, the loss decreases.
    \label{lem:flip-pos-b}
\end{lemma}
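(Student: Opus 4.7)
The plan is to mirror the proof of Lemma \ref{lem:flip-pos-a} using the $\mathbf{a}\leftrightarrow\mathbf{b}$ symmetry of the gradient derivation in Section \ref{sec:gradient-derivation}. I would fix an arbitrary word $b$ and bit index $j$, freeze every other entry of the embedding matrix $B$, and regard $L := Loss_P + Loss_N$ as a function of the single binary variable $\mathbf{b}_j$. Writing $L'$ for the (random) value of $L$ after \name's probabilistic update of $\mathbf{b}_j$, it suffices to show $\mathbb{E}[L' - L]\le 0$.

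The technical core is the finite-difference identity
\begin{equation*}
    \Delta[b,j] \;=\; L(\text{current}) - L(\text{after flipping } \mathbf{b}_j).
\end{equation*}
To prove it, I would decompose $\Delta[b,j]$ into per-pair contributions from the four kinds of training pairs that involve $b$: positive pairs $(a,b)\in P$ with $b$ as hypernym, positive pairs $(b,c)\in P$ with $b$ as hyponym, and the two analogous negative-pair cases. For positive pairs, Table \ref{tab:grad-positive} gives the per-pair drop in $Loss_P$ row by row, and those rows collapse into the summands of equations (\ref{eqn:grad-pos-b}) and (\ref{eqn:grad-pos-a}) (the latter applied with the roles of $a$ and $b$ swapped). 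For negative pairs, a case split on $G_{a',b'}\in\{0, 1, {>}1\}$ is needed, and the per-pair finite differences reproduce equations (\ref{eqn:grad-neg-a})--(\ref{eqn:grad-neg-b-j}). Summing over all pairs touching $b$ yields the identity.

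Once the identity is available, the conclusion follows from the flip-probability formula. With $b_\ell = 0$, the quantity $\max\{0, \tfrac 1 2 \tanh(2 r_\ell \Delta[b,j])\}$ is zero whenever $\Delta[b,j]\le 0$ and strictly positive whenever $\Delta[b,j]>0$, so the bit is flipped only when doing so strictly decreases the loss. Therefore
\begin{equation*}
    \mathbb{E}[L' - L] \;=\; -\,\mathrm{FlipProb}[b,j]\cdot \Delta[b,j] \;\le\; 0,
\end{equation*}
with strict inequality precisely when a beneficial flip exists, giving the claimed monotone decrease.

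The main obstacle is the negative-pair case analysis. Because $G_{a',b'}$ is a pair-global count, a single-bit flip at position $j$ changes the pair's loss contribution only when it crosses the feasibility boundary: $G$ can move from $0$ to $1$ in exactly one local configuration, from $1$ to $0$ in exactly the pair whose sole good bit sits at index $j$, and it leaves the loss invariant whenever $G>1$. Verifying that the closed-form expressions in equations (\ref{eqn:grad-neg-a})--(\ref{eqn:grad-neg-b-j}) pick out exactly these boundary-crossing configurations — with the $G>1$ contributions correctly zeroing — is the only delicate piece of bookkeeping; everything else transfers from Lemma \ref{lem:flip-pos-a} line for line by symmetry.
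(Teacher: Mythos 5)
Your proposal is correct, but it proves more than the paper does and organizes the argument differently. The paper's own proof of this lemma is a one-line reduction: it is ``identical to the proof of Lemma~\ref{lem:flip-pos-a}, except that we use the gradient value in Eq.~\ref{eqn:grad-pos-b} instead of Eq.~\ref{eqn:grad-pos-a}''; moreover, the appendix version of the statement deliberately narrows the claim to the sub-component of the \emph{positive} loss coming from pairs $(\cdot,b)$ in which $b$ is the hypernym, leaving negative pairs entirely to Lemma~\ref{lem:flip-neg} and the other positive role to Lemma~\ref{lem:flip-pos-a}. You instead treat the total loss $Loss_P+Loss_N$ as a function of the single bit $\mathbf{b}_j$, establish the finite-difference identity $\Delta[b,j]=L(\text{current})-L(\text{after flip})$ over \emph{all} pairs touching $b$ (both positive roles and both negative roles, with the $G\in\{0,1,{>}1\}$ case split), and then conclude via $\mathbb{E}[L'-L]=-\mathrm{FlipProb}[b,j]\cdot\Delta[b,j]\le 0$. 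This buys you two things: it actually proves the main-body phrasing of the lemma (``the loss decreases'') rather than only a sub-component of it, and it effectively absorbs the content of Lemma~\ref{lem:flip-neg} and much of the Theorem's bookkeeping into one identity; the expectation formulation is also cleaner than the paper's, which is silent about the case where the bit is not flipped and the loss merely stays the same. The cost is that you redo the negative-pair boundary-crossing analysis here, which the paper defers. One cosmetic nit: $G$ can move from $0$ to $1$ via two local configurations, namely flipping the $\mathbf{b}'$ side of a $(0,0)$ pair or the $\mathbf{a}'$ side of a $(1,1)$ pair, not ``exactly one''; this does not affect the argument since you only flip one side at a time.
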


\begin{lemma}
    When $b_\ell = 0$, given a collection of negative data instances $(a',b')$, if the $j$'th bit in the vector $\mathbf{a}'$ or $\mathbf{b}'$ (and only that bit) is updated by \name's probabilistic flipping, the loss function value decreases or remains the same. 
    \label{lem:flip-neg}
\end{lemma}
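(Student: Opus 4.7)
The plan is to leverage the $\max\{0,\cdot\}$ clamp in the flip probability together with the fact that $\Delta[w,j]$ is, by construction, the exact finite-difference decrease in the total loss caused by flipping bit $j$ of word $w$ while holding every other bit fixed. Once those two ingredients are in hand, the conclusion follows from a trivial case split on the sign of $\Delta[w,j]$.

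First I would recall that, with $b_\ell = 0$,
$$\mathrm{FlipProb}[w,j] = \max\!\bigl\{0,\, \tfrac{1}{2}\tanh(2 r_\ell \Delta[w,j])\bigr\}.$$
Since $\tanh$ is strictly increasing through the origin and $r_\ell > 0$, the inner expression and $\Delta[w,j]$ share a sign. Hence the flip probability vanishes whenever $\Delta[w,j] \le 0$: in that regime the bit is never toggled and the loss is literally unchanged.

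Next I would verify that $\Delta^-[w,j]$ coincides with the per-flip decrease in $Loss_N$ when only bit $j$ of $w$ is toggled, then add the (routine, bit-separable) $\Delta^+$ contribution to recover $-\Delta[w,j]$ as the total loss change. For each negative pair $(a',b')$ containing $w$, I would case-split on the current count $G_{a',b'} = |\{j:(\mathbf{a}'_j,\mathbf{b}'_j)=(0,1)\}|$. When $G_{a',b'} = 0$, the pair currently contributes $\beta$ to $Loss_N$, and a single-bit flip removes that contribution precisely when it creates a fresh $(0,1)$ at index $j$---this matches the first and fourth rows of Table~\ref{tab:grad-negative} and Equations~\ref{eqn:grad-neg-a}--\ref{eqn:grad-neg-b}. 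When $G_{a',b'} = 1$, the pair contributes $0$, and the flip raises its contribution back to $\beta$ exactly when it destroys the unique $(0,1)$ bit; this is the $-\beta$ protection term of Equations~\ref{eqn:grad-neg-a-j}--\ref{eqn:grad-neg-b-j}. When $G_{a',b'} \ge 2$, a single flip cannot push the count below one, so the contribution is invariant (zero gradient, as declared). Summing the per-pair differences over all negative pairs containing $w$ reproduces the stated gradient formulas exactly.

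Combining these observations closes the argument: if $\Delta[w,j] \le 0$ the flip probability is zero and the loss stays the same; if $\Delta[w,j] > 0$ the flip occurs with positive probability and, whenever it does, the loss strictly decreases by $\Delta[w,j]$. Because the lemma permits only bit $j$ of a single vector to change, no cross-index or cross-vector coupling can corrupt the bookkeeping, so in either outcome the loss decreases or remains the same. The main obstacle is the finite-difference identification for $Loss_N$: unlike $Loss_P$, which is separable across bit indices, $Loss_N$ employs a universal quantifier over $j$ and is therefore sensitive to interactions between indices. Verifying that the three-way case split on $G_{a',b'}$ captures every single-bit transition without overcounting or missing a case is the only nontrivial bookkeeping; once that is pinned down, the $\max\{0,\cdot\}$ clamp does the rest.
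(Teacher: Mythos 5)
Your proposal is correct and follows essentially the same route as the paper: with $b_\ell=0$ the $\max\{0,\cdot\}$ clamp kills the flip probability whenever the gradient is non-positive, and a positive gradient is by construction the exact single-bit finite-difference decrease of the loss, so any realized flip can only lower it. The only difference is one of completeness rather than of method: where the paper simply declares the argument ``identical to Lemma 1'' with Equations~\ref{eqn:grad-neg-a}--\ref{eqn:grad-neg-b-j} substituted in, you explicitly verify via the three-way case split on $G_{a',b'}$ that those formulas really are the per-pair finite differences of the non-separable $Loss_N$ (and you also fold in the $\Delta^+$ term for words appearing in both positive and negative pairs), which is a worthwhile check the paper leaves implicit.
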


\begin{theorem}
If $b_\ell = 0$ and Line 8 of Algorithm \ref{alg:training} is executed sequentially for each index $j$ for each of the entities, \text{\name} reaches a local optimal solution for a 1-Hamming distance neighborhood.
\end{theorem}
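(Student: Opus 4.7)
The plan is to assemble Lemmas \ref{lem:flip-pos-a}, \ref{lem:flip-pos-b}, and \ref{lem:flip-neg} into a monotone-convergence argument. First, I would observe that when $b_\ell = 0$ the flip probability
\[
\mathrm{FlipProb}[w,j] = \max\left\{0, \tfrac{1}{2}\tanh\!\bigl(2 r_\ell \Delta[w,j]\bigr)\right\}
\]
is zero whenever $\Delta[w,j] \le 0$ and is strictly positive only when $\Delta[w,j] > 0$. So under this hyperparameter setting, the algorithm never flips a bit unless doing so would strictly reduce the loss (or, in the negative-pair boundary case of Lemma \ref{lem:flip-neg}, leave it unchanged).

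Next, I would argue that the sequential execution of Line 8 makes the per-lemma bookkeeping apply cleanly: because only one bit $(w,j)$ is touched at a time, the change in $Loss_P + Loss_N$ caused by that single flip is exactly the finite difference analyzed in the three lemmas (no interference from simultaneous updates elsewhere). Stringing the lemmas together, a single sweep consists of $nd$ one-bit updates, each of which, in expectation, weakly decreases the total loss. Therefore the sequence of expected losses across sweeps is monotonically non-increasing.

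Since the loss is an integer-valued, non-negative functional bounded below by $0$, this expected-loss sequence must converge. The key step is to characterize the limit: at any state where some bit $(w,j)$ still has $\Delta[w,j] > 0$, that bit would be flipped with positive probability in the next sweep, and by the finite-difference definition of $\Delta$, the expected loss would strictly drop, contradicting convergence. Hence, at the limit, $\Delta[w,j] \le 0$ for every entity $w$ and every index $j$; by construction this means flipping any single bit cannot reduce the loss, which is exactly the definition of a local optimum under the 1-Hamming distance neighborhood.

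The main obstacle I expect is the formal handling of the randomized updates. One clean route is to keep everything in expectation and invoke the monotone convergence theorem on the bounded non-increasing sequence of expected losses; a stronger almost-sure statement requires noting that each positive-gradient flip has probability bounded away from $0$ (since $r_\ell$ is fixed and $\Delta$ is integer-valued, so $\tfrac{1}{2}\tanh(2 r_\ell \Delta) \ge \tfrac{1}{2}\tanh(2 r_\ell) > 0$ whenever $\Delta \ge 1$) and applying a Borel--Cantelli argument to conclude that the algorithm almost surely does not linger in any non-local-optimum state. The three lemmas carry most of the technical weight; the theorem itself is essentially a ``putting it all together'' claim once the probability-zero-on-non-positive-gradient observation is in hand.
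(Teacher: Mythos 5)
Your proposal is correct and follows essentially the same route as the paper's own proof: invoke the three lemmas to show each sequential single-bit update is non-increasing for the loss, use boundedness/finiteness to force convergence, and observe that with $b_\ell = 0$ the flip probability vanishes exactly when $\Delta[w,j] \le 0$, so the limiting state is a 1-Hamming local optimum. Your added care about the stochastic updates (arguing in expectation and noting that any positive-gradient flip has probability bounded below by $\tfrac{1}{2}\tanh(2 r_\ell) > 0$, so a Borel--Cantelli argument rules out lingering in a non-optimal state) actually tightens a step the paper treats informally ("eventually BINDER will reach a point\ldots"), but it is a refinement of the same argument rather than a different one.
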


Proofs of these statements are provided in Appendix ~\ref{sec:proof of convergence}.
Experimental evidences of convergence to a local optimal value is available from the loss vs iteration curves shown in Figure~\ref{fig:nouns_convergence}.

\section{Experiments and Results\label{sec:experiments}}
To demonstrate the effectiveness of {\name}, we evaluate its performance on two standard tasks for hierarchical 
representation learning: Representation and Transitive Closure (``TC'') Link Prediction. Our representation experimental setup is identical to \cite{boratko_capacity_2021}'s full transitive closure (TC) representation experiment. Our dataset creation and experimental setup for TC link prediction task are identical to \cite{ganea_hyperbolic_2018}. Note that some of the competing works \cite{bhattacharjee_what_2023, nickel_three-way_2011, trouillon_complex_2016, vendrov_order-embeddings_2015, lai_learning_2017} did not report results for the representation and the TC link prediction tasks.
On the other hand, some competitors \cite{ganea_hyperbolic_2018, law_lorentzian_2019} only gave results on the TC link prediction; in particular \cite{law_lorentzian_2019} only reported results on $80\%$ TC link prediction. The latest work on box embedding \cite{boratko_capacity_2021} only reported results on the representation task. Another box embedding work \cite{dasgupta_improving_2020} reported results on the ranking (similar to the reconstruction task) and the $0\%$ TC link prediction task.

{\bf Representation} is an inverse mapping from embedding vectors to the list of positive and negative pairs in the full adjacency matrix.  If $|W| = n$ is the vocabulary size, there are ${n}^2 - n$ edges in the full adjacency matrix.
A high accuracy in representation testifies for the capacity of the learning algorithm: 
it confirms that the learning algorithm obtains embedding vectors to satisfy the representation constraints 
of both the positive and negative pairs. To evaluate representation, 
we train models over the complete data set. We then create a test dataset that includes the entire positive and negative edge set in the full adjacency matrix (this setup is identical to ~\cite{boratko_capacity_2021}). We then validate whether for each positive and negative pair, each model's embedding vectors satisfy respective constraints or not.

{\bf Link Prediction} is predicting edges that the learning algorithm has not seen during training. Since ``is-a'' relation is transitive, transitive-pair edges are redundant: if $(a, b)$ and $(b,c)$ are positive pairs, then $(a, c)$ should also be a positive pair. A basic hierarchical embedding model should be able to deduce this fact automatically. In link prediction task, we validate each of the models' ability to make this deduction.
We include the basic (direct) edges in the train data. These are edges $(a,b)$ with no other word $c$ between them. The remaining non-basic (indirect or transitive) edges are split into validation and test set. For validation and test set, we use 10 times more negative pairs than positive pairs. We add 0\%, 10\%, 25\% and 50\% of the non-basic edges to generate four training sets. Our experimental setup is identical to ~\cite{ganea_hyperbolic_2018}. We then validate for each unseen positive and negative pair in the test set, whether each model's embedding vectors satisfy respective constraints or not. 


The remainder of this Section is as follows. In Section~\ref{subsec:datasets}, we give details of the datasets.
In Section~\ref{subsec:comparison}, we discuss competing methods and our justification for why they were chosen.
In Section~\ref{subsec:explainability}, we show a visual representation of \name's embedding in a small dataset.
In Section ~\ref{sec:training_setup}, we discuss our training setup.
In Sections~\ref{subsec:recon-results} and~\ref{subsec:lp-results}, we compare \name's performance with that of the competing methods.
In Section ~\ref{subsec:resource_consumptions}, we show the space advantage of {\name} embeddings over the competitors.
We report additional details of our experimental setup and further results in Appendix ~\ref{sec:experimental_setup}. In Appendix ~\ref{subsec:edge_distribution}, we discuss the edge distribution for each dataset.
In Appendix ~\ref{subsec:hyperparameter-tuning}, we provide low-level details of our training setup and hyperparameter tuning. In Appendix ~\ref{subsec:ablation} we perform an ablation study to show how different components of the loss function affect \name's performance.
In Appendix ~\ref{subsec:robustness}, we demonstrate the robustness of {\name} by showing how changing hyperparameter values affects \name's performance.
In Appendix ~\ref{sec:convergence}, we show how \name\ loss and $F1$-score converges with the number of iterations for both tasks.

\vspace{-1mm}
\subsection{Datasets\label{subsec:datasets}}
For representation task, we evaluate {\name } on \textbf{6} datasets. We downloaded Music and Medical domain dataset from SemEval-2018 Task 9: Hypernym Discovery. 
We also train on the subtree consisting of all hyponyms of \texttt{animal.n.01} (including itself), which has 4,017 entities and 4,051 direct edges and transitive closure with 29,795 edges.
We collected Lex and random dataset from \cite{shwartz_improving_2016}, which were constructed by extracting hypernymy relations from WordNet \cite{chapelle_wordnet_2012}, DBPedia \cite{auer_dbpedia_2007}, Wiki-data \cite{vrandecic_wikidata_2012} and Yago \cite{suchanek_yago_2007}. The largest dataset we considered is the WordNet noun hierarchy dataset. Its full transitive closure contains 82,115 Nouns and 743,241 hypernymy relations.
This number includes the reflexive relations $w$ is-a $w$ for each word $w$, which we removed for our experiments, leaving 661,127 relational edges.
We generate \emph{negative samples} following a similar method to  \cite{vendrov_order-embeddings_2015, nickel_poincare_2017, ganea_hyperbolic_2018}: we corrupt one of the words in a positive pair, and discard any pairs that happen to be positive pairs.

For prediction, we evaluate {\name} on the same datasets.
For each dataset we show the direct and indirect edge counts in table \ref{tab:data_statistics} and edge distribution percentage in Figure \ref{fig:edge_distribution} in Appendix ~\ref{subsec:edge_distribution}.
We remove the root of WordNet Nouns as its edges are trivial to predict.
The remaining transitive closure of the WordNet Nouns hierarchy consists of 82,114 Nouns and 661,127 hypernymy relations.

\begin{table*}[t!]
  \caption{Representation Experiment results {  } $F1$-score{(\%)}}
  \vspace{-3.5mm}
  \label{tab:representation_exp}
  \centering
  \begin{tabular}{|l|c|c|c|c|c|c|}
    \toprule
    & \textbf{Medical} & \textbf{Music} & \textbf{Animals} & \textbf{Shwartz Lex} & \textbf{Shwartz Random} & \textbf{WordNet Nouns} \\
    Model &  entities = 1.4k  & entities = 1k  &  entities = 4k  & entities = 5.8k  &  entities = 13.2k  & entities = 82k  \\
    & edges = 4.3k & edges = 6.5k  & edges = 29.8k  & edges = 13.5k  & edges = 56.2k  & edges = 743k  \\
    \midrule
    Sim \cite{bhattacharjee_what_2023} & $\mathbf{100}$ & $\mathbf{100}$ & $\mathbf{100}$ & 97.67 & 99.2 & 83.74 \\
    Bilinear \cite{nickel_three-way_2011} & $\mathbf{100}$ & $\mathbf{100}$ & $\mathbf{100}$ & 99.91 & 98.23 & 26.02 \\
    Complex \cite{trouillon_complex_2016} & $\mathbf{100}$ & $\mathbf{100}$ & $\mathbf{100}$ & $\mathbf{100}$ & 99.5 & 94.13 \\
    OE \cite{vendrov_order-embeddings_2015} & 99.88 & 99.94 & 99.76 & 99.9 & 98.82 & 87.27 \\
    POE \cite{lai_learning_2017} & $\mathbf{100}$ & $\mathbf{100}$ & $\mathbf{100}$ & $\mathbf{100}$ & $\mathbf{100}$ & 99.89 \\
    Lorentzian distance \cite{law_lorentzian_2019} & 58.37 & 56.77 & 66.36 & 69.91 & 56.54 & 7.78 \\
    HEC \cite{ganea_hyperbolic_2018} & 97.01 & 96.35 & 95.81 & 94.63 & 94.45 & 57.19 \\
    Gumbel Box \cite{dasgupta_improving_2020} & $\mathbf{100}$ & $\mathbf{100}$ & $\mathbf{100}$ & $\mathbf{100}$ & $\mathbf{100}$ & $\mathbf{100}$ \\
    T Box (/n/d) \cite{boratko_capacity_2021} & $\mathbf{100}$ & $\mathbf{100}$ & $\mathbf{100}$ & $\mathbf{100}$ & $\mathbf{100}$ & 99.96 \\
    \midrule
    BINDER (our model) & $98.26 \pm 0.18$ & $99.16 \pm 0.11$ & $97.98 \pm 1.63$ & $99.12 \pm 0.1$ & $99.3 \pm 0.07$  & $93.34 \pm 0.72$\\
    \bottomrule
\end{tabular}
\vspace{-3.5mm}
\end{table*}

\vspace{-1mm}
\subsection{Competing methods and Metrics Used\label{subsec:comparison}}

We exhaustively compare our model to \textbf{9} existing embedding methods for directed graphs: vectors in Euclidean space (Similarity Vectors \cite{bhattacharjee_what_2023} \& Bilinear Vectors \cite{nickel_three-way_2011}), vectors in Complex space \cite{trouillon_complex_2016}, Hyperbolic embedding using Lorentzian distance learning \cite{law_lorentzian_2019}, the continuous Order Embeddings \cite{vendrov_order-embeddings_2015}, the Probabilistic Order Embeddings \cite{lai_learning_2017}, Hyperbolic Entailment Cones \cite{ganea_hyperbolic_2018}, and the probabilistic box embedding methods, Gumbel Box \cite{dasgupta_improving_2020} and T-Box \cite{boratko_capacity_2021}.
All these methods are intended to produce embedding for entities having hierarchical organization.
Among the above methods, our model is most similar to Order Embedding~\cite{vendrov_order-embeddings_2015}, as our model is simply the restriction of theirs from $(\mathbb{R}^+)^d$ to $\{0, 1\}^d$.
So, this model is a natural competitor.
{\name} is also similar to Probabilistic Order Embedding, Box Embeddings and Hyperbolic Entailment Cones \cite{ganea_hyperbolic_2018}, as they all are transitive order embeddings. 
We also compare to Hyperbolic Embeddings \cite{nickel_poincare_2017, nickel_learning_2018, law_lorentzian_2019}, which uses a distance ranking function to embed the undirected version of the hierarchy graph. We consider Lorentzian distance learning over Poncar\'e Embeddings \cite{law_lorentzian_2019} as it is the latest among hyperbolic methods. In subsequent discussion and result
tables, we will denote Similarity Vector model as Sim, Bilinear vector model as Bilinear, Complex vector model as Complex, Lorentzian Distance Learning Hyperbolic model as Lorentzian Distance, Order Embeddings model as OE, Probabilistic Order Embeddings as POE, and Hyperbolic Entailment Cones as HEC. 
We report $F1$-score for our experimental results, as it is a preferred metric for imbalanced data.

\begin{figure}[b]
    \centering
    \vspace{-2mm}
    \includegraphics[width=0.40\textwidth]{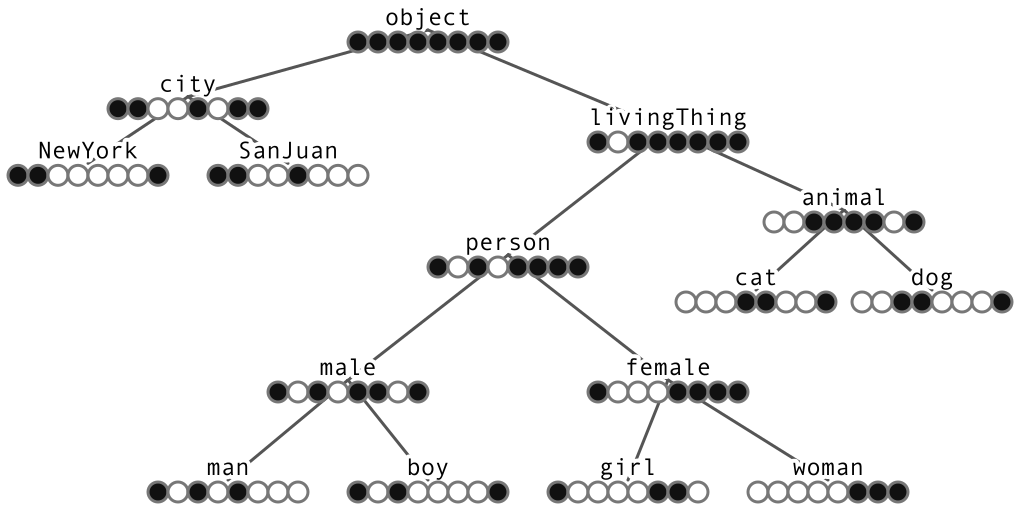}

    \vspace{-3mm}
    \caption{Visual representation of toy dataset results. White circles represent 1 and black circles 0.}
    \label{fig:toyvisual}
    \vspace{-5mm}
\end{figure}

\vspace{-1mm}
\subsection{Case Study\label{subsec:explainability}}

We present a case-study experiment, which will provide the reader a sketch of \name's embedding results. For this
we run our model on a toy dataset from \cite{vendrov_order-embeddings_2015} to generate 8-bit
embedding vectors for 15 entities; \name\ achieves perfect score in this task satisfying all constraints.
Figure \ref{fig:toyvisual} shows the embedding vectors in a tree structure. By observing the embedding vectors of  \texttt{boy}, \texttt{person}, and \texttt{city}, anyone can determine that \texttt{boy} is-a \texttt{person} 
but is-not-a \texttt{city}. We can grasp \name's embedding by thinking that 
each bit corresponds to some ``attribute'' of the object, where all attributes are passed down to hyponyms.
This can help to build intuitively interpretable embeddings for the concepts in a dataset.
For instance, the second bit can be an attribute which could be named as \textit{has-life}, since it is the 
only 1 bit in the \texttt{livingThing} embedding. Sometimes, however, bits can be used with different meanings 
in different words: the right-most bit is 1 on \texttt{man}, \texttt{girl}, and \texttt{SanJuan} embeddings. 
This is partly because the toy dataset is very sparse, with only two sub-components of \texttt{city} compared to ten sub-components of \texttt{livingThing}, and \texttt{adult} and \texttt{child} were not included in the dataset, as they are in WordNet. If we consider the attributes as intent, and objects as extent, \name's binary embedding 
is a manifestation of intent-extent relationship in formal concept analysis.

\begin{table*}[t] 
  \caption{Transitive Closure Link Prediction Experiment results {  } $F1$-score{(\%)}}
  \label{tab:transitive_closure_exp}
  \vspace{-3mm}
  \centering
  \begin{tabular}{|l|c|c|c|c|c|c|}
    \toprule
    Model & \textbf{Medical} & \textbf{Music} & \textbf{Animals} & \textbf{Shwartz Lex} & \textbf{Shwartz Random} & \textbf{WordNet Nouns} \\
    \midrule
    \multicolumn{7}{c}{\textbf{Transitive Closure 0\%}}\\
    \midrule
    Sim & 21.23 & 25.69 & 18.64 & 18.69 & 22.96 & 23.54 \\
    Bilinear & 21.92 & 26.24 & 26.05 & 28.05 & 16.68 & 25.71 \\
    Complex & 23.04 & 29.27 & 23.35 & 17.46 & 16.77 & 24.13 \\
    OE & 55.9 & 46.94 & 33.99 & 32.29 & 30.57 & 28.52 \\
    POE & 81.18 & 48.3 & 31.81 & 30.66 & 28.61 & 27.26 \\
    Lorentzian distance & 62.86 & 33.33 & 33.53 & 40.41 & 34.01 & 28.26 \\
    HEC & 53.4 & 35.88 & 32.33 & 36.38 & 32.92 & 29.03 \\
    Gumbel Box & 88.51 & 64.44 & 50.65 & 46.56 & 29.74 & 28.3 \\
    T Box (/n/d) & 71.05 & 46.77 & 33.49 & 30.45 & 27.53 & 27.84 \\
    \midrule
    BINDER & $\mathbf{9 8 . 6 1} \pm \mathbf{1 . 3 2}$ & $\mathbf{9 2 . 1 3 \pm \mathbf { 2 . 2 7 }}$ & $\mathbf{9 8 . 9 9} \pm \mathbf{0 . 2 4}$ & $\mathbf{9 9 . 8 7} \pm \mathbf{0 . 0 9}$ & $\mathbf{9 9 . 5 2} \pm \mathbf{0 . 2 1}$ & $\mathbf{9 8 . 6 4} \pm \mathbf{0 . 0 9}$ \\
    \midrule
    \multicolumn{7}{c}{\textbf{Transitive Closure 10\%}}\\
    \midrule
    Sim & 55.42 & 40.31 & 38.15 & 43.88 & 31.68 & 27.89 \\
    Bilinear & 40.48 & 37.4 & 31.74 & 32.36 & 16.69 & 16.67 \\
    Complex & 38.42 & 42.32 & 19.36 & 17.95 & 17.06 & 16.93 \\
    OE & 52.63 & 50 & 33.37 & 33.7 & 35.75 & 28.22 \\
    POE & 93.83 & 77.19 & 48.16 & 32.15 & 34.37 & 33.35 \\
    Lorentzian distance & 68.02 & 47.67 & 42 & 69.53 & 62.39 & 54.75 \\
    HEC & 66.67 & 45.97 & 54.37 & 78.12 & 70.07 & 65.11 \\
    Gumbel Box & 93.67 & 83.27 & 65.26 & 58.58 & 52.54 & 32.33 \\
    T Box (/n/d) & 76.32 & 60.35 & 47.24 & 37.37 & 36.02 & 30.33 \\
    \midrule
    BINDER & $\mathbf{99.41 \pm 0.42}$ & $\mathbf{93.9 \pm 1.36}$ & $\mathbf{98 \pm 0.9}$ & $\mathbf{99.95 \pm 0.07}$ & $\mathbf{99.33 \pm 0.85}$ & $\mathbf{98.49 \pm 0.21}$ \\
    \midrule
    \multicolumn{7}{c}{\textbf{Transitive Closure 25\%}}\\
    \midrule
    Sim & 64 & 55.26 & 50.23 & 38.31 & 38.19 & 28.62 \\
    Bilinear & 51.81 & 50 & 44.33 & 18.4 & 16.68 & 16.69 \\
    Complex & 58.29 & 57 & 31.27 & 18.02 & 17.6 & 16.78 \\
    OE & 52.34 & 48.11 & 33.17 & 35.85 & 39.16 & 28.18 \\
    POE & 82.02 & 53.03 & 31.87 & 38.57 & 39.18 & 28.05 \\
    Lorentzian distance & 69.31 & 46.2 & 61.61 & 55.1 & 77.27 & 59.51 \\
    HEC & 72.63 & 56.58 & 53.6 & 81 & 68.77 & 56.68 \\
    Gumbel Box & 98.18 & 89.05 & 80.98 & 67.31 & 67.38 & 31.42 \\
    T Box (/n/d) & 81.88 & 58.12 & 52.95 & 34.6 & 45.28 & 29.87 \\
    \midrule
    BINDER & $\mathbf{99.53 \pm 0.76}$ & $\mathbf{92.78 \pm 1.58}$ & $\mathbf{99.05 \pm 0.8}$ & $\mathbf{99.97 \pm 0.06}$ & $\mathbf{98.78 \pm 0.51}$ & $\mathbf{99.04 \pm 0.14}$ \\
    \midrule
    \multicolumn{7}{c}{\textbf{Transitive Closure 50\%}}\\
    \midrule
    Sim & 82.35 & 72.22 & 67.04 & 49.44 & 38.53 & 28.91 \\
    Bilinear & 69.01 & 66.67 & 17.90 & 17.14 & 16.68 & 16.67 \\
    Complex & 50 & 36.05 & 31.47 & 18.24 & 17.35 & 16.73 \\
    OE & 55.51 & 50.23 & 32.37 & 37.48 & 39.97 & 28.16 \\
    POE & 61.22 & 60.52 & 31.96 & 36.2 & 43.27 & 28.09 \\
    Lorentzian distance & 75.9 & 46.49 & 65.52 & 45.82 & 85.03 & 61.16 \\
    HEC & 79.76 & 59.11 & 51.96 & 79.15 & 66.44 & 57.79 \\
    Gumbel Box & 99.41 & $\mathbf{97.28}$ & 90.72 & 62.99 & 48.84 & 28.81 \\
    T Box (/n/d) & 81.76 & 72.22 & 64.26 & 38.45 & 49.14 & 34.81 \\
    \midrule
    BINDER & $\mathbf{99.42 \pm 1.0}$ & $93.8 \pm 0.87$ & $\mathbf{99.9 \pm 0.1}$ & $\mathbf{99.92 \pm 0.07}$ & $\mathbf{99.85 \pm 0.14}$ & $\mathbf{99.56 \pm 0.03}$\\
    \bottomrule
\end{tabular}
\end{table*}

\vspace{-1mm}
\subsection{Training Setup \label{sec:training_setup}}

For \name, we learn a $d$-bit array for each concept in the hierarchy.
For all tasks, we train \name\ for 10000 epochs, where each epoch considers the full batch for training.
We tune our hyper-parameters: dimension $d$, positive and negative sample weights $\alpha$, $\beta$, negative sample multiplier $n^-$, and learning rate $r_\ell$ and bias $b_\ell$ manually by running separate experiments for each dataset.
We find that the optimal learning rate $r_\ell$ and learning bias $b_\ell$ are 0.008 and 0.01 respectively for all datasets and tasks.
Although ablation study results in Section ~\ref{subsec:ablation} show that the learning bias $b_\ell$ has no effect on the results reported, we still keep it at $b_\ell = 0.01$, so bits whose gradient was exactly neutral have a 1\% chance of flipping.
We fix $\beta = 10$ and tune $\alpha$; we always find that $\alpha \leq \beta$ gives far too many false negatives.
We report best hyperparameter configurations for all tasks in Table ~\ref{tab:best_hyperparameters} in Appendix ~\ref{subsec:hyperparameter-tuning}.
For the representation task we need more bits to increase the capacity of our model to better reconstruct training edges.
We also extensively hyperparameter tuned our competitors, and we provide details of hyperparameter ranges that we considered in Table ~\ref{tab:hyperparameter_range} in Appendix ~\ref{subsec:hyperparameter-tuning}.

\vspace{-1mm}
\subsection{Representation Task Results\label{subsec:recon-results}}

This task is easier; all the competing methods perform better on this task than on link prediction.
This is because representation is similar to a fitting task (like training accuracy in ML) and with higher dimensions, models have more freedom (higher capacity) to satisfy the order constraints in the training data.
The results are shown in Table~\ref{tab:representation_exp}, 
in which the datasets are arranged in increasing size from left to right.
For each dataset and each model, we report the $F1$-scores.
Since \name\ is a randomized algorithm, we show mean and standard deviation of the results obtained in 5 runs. 

\name, and all the competing methods except hyperbolic embedding (HEC, Lorentzian), perform very well on the smaller datasets (entity count around 
5K)---some of the competing methods reach 100\% $F1$-score, while \name\ 
achieves 98\% to 99.5\% $F1$-score. However, on the largest 
dataset, WordNet (82K entities), their $F1$ score drops drastically (for Bilinear the drop is from 100\% to 26\%), except for T-Box and Gumbel Box, which retain their performance. \name's performance on WordNet is about 93.3\%. Given
that there are a finite number of bit vectors, \name\ has a limited fitting
capacity; considering this, \name's results are excellent. {\name} 
reaches very good embeddings in a relatively small number of iterations (see bottom left graph in Figure ~\ref{fig:nouns_convergence} in Appendix ~\ref{sec:convergence}), but near a local minima,
the flipping probability becomes small (due to low gradient), which causes the
method not to achieve perfect fit (100\% $F1$-score) like some of the other methods. Since for the representation task we include all positive pairs in training, a perfect $F1$-score indicates that a model is good at fitting (possibly overfitting) training data, which may work against the model to achieve generalization on unseen data. On the other hand, the apparent low capacity of \name\ is a blessing in disguise, which we show in the next section where we compare \name's performance with the competitors on link prediction of unseen transitive edges.


\vspace{-1mm}
\subsection{Link Prediction Results\label{subsec:lp-results}}
Link prediction task is similar to test accuracy in a typical machine learning 
task. It is more challenging and practically useful than representation, as prediction is performed on unseen edges. The results of this task for all datasets and all
methods are shown in Table \ref{tab:transitive_closure_exp}.
As we can see from the results, {\name} handsomely wins over all competitors in all transitive closure (TC) link prediction tasks (except the Music dataset in TC 50\% task). In fact in all datasets, \name's $F1$-score
is around 95\% or more, whereas the competing methods struggle to reach even 50\% $F1$-score.
Note that, in a tree-like dataset, there are generally much more transitive edges than direct edges.
As we add more transitive edges to the training dataset, the competitors' results improve, which shows that their methods rely on these edges for link prediction, whereas {\name} does not (\name's $F1$-scores are between 98\% to 99.5\% for 0\%, 10\%, 25\%, and 50\% TC edges). 
Other models' performances suffer significantly as the dataset size increases, whereas {\name} maintains its performance, and thus is more scalable.
For 0\% TC link prediction on the largest dataset, WordNet (with 743,241 edges), {\name}'s $F1$-score (98.5\%) surpassed that of the best
competing models (29\%) by about 70\%. From our investigation, \name's superiority comes from the nature of its constraints, which enable it to predict negative pairs ($a$ is-not-a $b$) by creating only one $(0,1)$ bit pair.

\vspace{-1mm}
\subsection{Space advantage of \name \label{subsec:resource_consumptions} }
One of the main advantages of using binary vectors is their space efficiency compared to floating-point vectors. The final representation of the concepts using \name\ are bit vectors, so the storage required for $n$ words and $d$ dimensions is $\frac{dn}{8}$ bytes. This is much smaller than OE and HEC's $4dn$ bytes, Gumbel Box's $8dn$ bytes, and TBox's $16dn$ bytes if 32-bit floating point values are used.
If we consider the WordNet Nouns dataset and $d=100$, the vector, Order Embedding and Hyperbolic methods takes at least 34.2 MB for their embedding (including the 1.33 MB required for the WordNet dataset itself), while box embedding methods take up to 67 MB in storage space.
In contrast, {\name} takes only 2.36 MB for its embedding, which is significantly smaller than its floating-point vector competitors. We did use the floating-point \emph{tanh} function during training for probability computations, but this was mostly for convenience; we expect that discrete probability functions, and reductions in memory usage during training, can be derived and used alongside {\name} in a rigorous way in future works.

\vspace{-0.5mm}
\section{Related Works}

We have discussed about the works that perform hierarchical embedding in the Introduction section and compared our results with those in the Results section. Binarized embedding was previously explored in Knowledge Graph space.
Authors in \cite{wang_learning_2019} proposed learning to hash using compact binary codes for KG completion and query tasks.
They show that for large-scale similarity computation, binary codes are computationally efficient over continuous dense vectors.
To further address the challenge of large storage and computational cost of continuous graph embedding methods, \cite{li_discrete_2021} proposed a discrete knowledge graph embedding (DKGE) method.
The closest to our work is \cite{hayashi_greedy_2020} which proposed a discrete optimization method for training a binarized knowledge graph embedding model. 
But unlike \name, all of these approaches either use continuous real-valued vectors \cite{wang_learning_2019, li_discrete_2021} or a combination of binary and continuous real-valued vectors \cite{hayashi_greedy_2020} for optimization process.
Binary code is also used with Box Embedding method to model directed graphs that contain cycles \cite{zhang_modeling_2022}.
For data compression while preserving the Euclidean distance between a pair of entities, \cite{zhang_faster_2021} used $k$-dimensional binary vector space $[-1, +1]^k$.

To the best of our knowledge, we are the first to propose hierarchical concept representation in binary space. Our use of intent-extent philosophy of formal concept analysis (FCA) to derive hierarchical concept representation is also novel.
Previously, \cite{rudolph_using_2007} used FCA to encode a formal context's closure operators into NN, and \cite{durrschnabel_fca2vec_2019} introduced fca2vec similar to node2vec \cite{grover_node2vec_2016}, which embeds existing FCA systems into real-valued vector spaces using NN method. None of these works are our competitor, as their embedding objective is to embed in 2-dimensional real space for visualization. 


\vspace{-0.5mm}
\section{Future Works and Conclusion}

{\name} is the first work to use binary vectors for embedding hierarchical concepts, so there are numerous scopes for building on top of this work.
First, we wish to explore more efficient combinatorial optimization algorithms to replace \name's learning algorithm, by using various well-known CSP heuristics and exploring discrete randomization to replace the \emph{tanh} function.
\name’s Loss function can also be extended with a node similarity expression based on Hamming distance between embedding vectors, which we plan to do next.
In terms of limitations, \name\ is a transductive model, i.e. a concept must appear in the training data for its embedding to be learnt, but this limitation is also shared by existing hierarchical embedding models.
However, \name\ can generate embeddings for derived concepts by using logical functions over existing concepts more readily than other methods.
Another future work can be to make \name\ inductive over unseen concepts by including knowledge about the broadness or narrowness of a concept from a large distributed language model, such as BERT, RoBERTa, or GLoVe.

To conclude, in this work, we propose \name, a novel approach for order embedding using binary vectors. {\name} is ideal for finding 
representation of concepts exhibiting hypernym-hyponym relationship. Also, \name's binary vector based embedding is extensive 
as it allows obtaining embedding of other concepts which are logical derivatives of existing concepts. 
Experiments on 6 benchmark datasets
show that \name\ is superior than the existing state-of-the-art order embedding methodologies in the transitive closure link prediction tasks.


\vspace{-0.5mm}
\section{Acknowledgements}
This material is based upon work supported by the National Science Foundation (NSF), USA under Grant No. IIS-1909916.

\bibliographystyle{ACM-Reference-Format}
\bibliography{references.bib}


\begin{thebibliography}{52}


\ifx \showCODEN    \undefined \def \showCODEN     #1{\unskip}     \fi
\ifx \showDOI      \undefined \def \showDOI       #1{#1}\fi
\ifx \showISBNx    \undefined \def \showISBNx     #1{\unskip}     \fi
\ifx \showISBNxiii \undefined \def \showISBNxiii  #1{\unskip}     \fi
\ifx \showISSN     \undefined \def \showISSN      #1{\unskip}     \fi
\ifx \showLCCN     \undefined \def \showLCCN      #1{\unskip}     \fi
\ifx \shownote     \undefined \def \shownote      #1{#1}          \fi
\ifx \showarticletitle \undefined \def \showarticletitle #1{#1}   \fi
\ifx \showURL      \undefined \def \showURL       {\relax}        \fi
\providecommand\bibfield[2]{#2}
\providecommand\bibinfo[2]{#2}
\providecommand\natexlab[1]{#1}
\providecommand\showeprint[2][]{arXiv:#2}

\bibitem[Auer et~al\mbox{.}(2007)]%
        {auer_dbpedia_2007}
\bibfield{author}{\bibinfo{person}{Sören Auer}, \bibinfo{person}{Christian
  Bizer}, \bibinfo{person}{Georgi Kobilarov}, \bibinfo{person}{Jens Lehmann},
  \bibinfo{person}{Richard Cyganiak}, {and} \bibinfo{person}{Zachary Ives}.}
  \bibinfo{year}{2007}\natexlab{}.
\newblock \showarticletitle{{DBpedia}: {A} {Nucleus} for a {Web} of {Open}
  {Data}}. In \bibinfo{booktitle}{\emph{The {Semantic} {Web}}}
  \emph{(\bibinfo{series}{Lecture {Notes} in {Computer} {Science}})},
  \bibfield{editor}{\bibinfo{person}{Karl Aberer}, \bibinfo{person}{Key-Sun
  Choi}, \bibinfo{person}{Natasha Noy}, \bibinfo{person}{Dean Allemang},
  \bibinfo{person}{Kyung-Il Lee}, \bibinfo{person}{Lyndon Nixon},
  \bibinfo{person}{Jennifer Golbeck}, \bibinfo{person}{Peter Mika},
  \bibinfo{person}{Diana Maynard}, \bibinfo{person}{Riichiro Mizoguchi},
  \bibinfo{person}{Guus Schreiber}, {and} \bibinfo{person}{Philippe
  Cudré-Mauroux}} (Eds.). \bibinfo{publisher}{Springer},
  \bibinfo{address}{Berlin, Heidelberg}, \bibinfo{pages}{722--735}.
\newblock
\showISBNx{9783540762980}
\urldef\tempurl%
\url{https://doi.org/10.1007/978-3-540-76298-0_52}
\showDOI{\tempurl}


\bibitem[Beck et~al\mbox{.}(2011)]%
        {beck_combining_2011}
\bibfield{author}{\bibinfo{person}{J.~Christopher Beck}, \bibinfo{person}{T.~K.
  Feng}, {and} \bibinfo{person}{Jean-Paul Watson}.}
  \bibinfo{year}{2011}\natexlab{}.
\newblock \showarticletitle{Combining {Constraint} {Programming} and {Local}
  {Search} for {Job}-{Shop} {Scheduling}}.
\newblock \bibinfo{journal}{\emph{INFORMS Journal on Computing}}
  \bibinfo{volume}{23}, \bibinfo{number}{1} (\bibinfo{date}{Feb.}
  \bibinfo{year}{2011}), \bibinfo{pages}{1--14}.
\newblock
\showISSN{1091-9856, 1526-5528}
\urldef\tempurl%
\url{https://doi.org/10.1287/ijoc.1100.0388}
\showDOI{\tempurl}


\bibitem[Bhattacharjee and Dasgupta(2023)]%
        {bhattacharjee_what_2023}
\bibfield{author}{\bibinfo{person}{Robi Bhattacharjee} {and}
  \bibinfo{person}{Sanjoy Dasgupta}.} \bibinfo{year}{2023}\natexlab{}.
\newblock \bibinfo{title}{What relations are reliably embeddable in {Euclidean}
  space?}
\newblock
\newblock
\urldef\tempurl%
\url{https://doi.org/10.48550/arXiv.1903.05347}
\showDOI{\tempurl}
\newblock
\shownote{arXiv:1903.05347 [cs, stat]}.


\bibitem[Boratko et~al\mbox{.}(2021)]%
        {boratko_capacity_2021}
\bibfield{author}{\bibinfo{person}{Michael Boratko}, \bibinfo{person}{Dongxu
  Zhang}, \bibinfo{person}{Nicholas Monath}, \bibinfo{person}{Luke Vilnis},
  \bibinfo{person}{Kenneth~L Clarkson}, {and} \bibinfo{person}{Andrew
  McCallum}.} \bibinfo{year}{2021}\natexlab{}.
\newblock \showarticletitle{Capacity and {Bias} of {Learned} {Geometric}
  {Embeddings} for {Directed} {Graphs}}. In \bibinfo{booktitle}{\emph{Advances
  in {Neural} {Information} {Processing} {Systems}}},
  Vol.~\bibinfo{volume}{34}. \bibinfo{publisher}{Curran Associates, Inc.},
  \bibinfo{pages}{16423--16436}.
\newblock
\urldef\tempurl%
\url{https://proceedings.neurips.cc/paper/2021/hash/88d25099b103efd638163ecb40a55589-Abstract.html}
\showURL{%
\tempurl}


\bibitem[Bordes et~al\mbox{.}(2013)]%
        {bordes_translating_2013}
\bibfield{author}{\bibinfo{person}{Antoine Bordes}, \bibinfo{person}{Nicolas
  Usunier}, \bibinfo{person}{Alberto Garcia-Durán}, \bibinfo{person}{Jason
  Weston}, {and} \bibinfo{person}{Oksana Yakhnenko}.}
  \bibinfo{year}{2013}\natexlab{}.
\newblock \showarticletitle{Translating embeddings for modeling
  multi-relational data}. In \bibinfo{booktitle}{\emph{Proceedings of the 26th
  {International} {Conference} on {Neural} {Information} {Processing} {Systems}
  - {Volume} 2}} \emph{(\bibinfo{series}{{NIPS}'13})}.
  \bibinfo{publisher}{Curran Associates Inc.}, \bibinfo{address}{Red Hook, NY,
  USA}, \bibinfo{pages}{2787--2795}.
\newblock


\bibitem[Cormen et~al\mbox{.}(2022)]%
        {cormen_introduction_2022}
\bibfield{author}{\bibinfo{person}{Thomas~H. Cormen},
  \bibinfo{person}{Charles~E. Leiserson}, \bibinfo{person}{Ronald~L. Rivest},
  {and} \bibinfo{person}{Clifford Stein}.} \bibinfo{year}{2022}\natexlab{}.
\newblock \bibinfo{booktitle}{\emph{Introduction to {Algorithms}, fourth
  edition}}.
\newblock \bibinfo{publisher}{MIT Press}.
\newblock
\showISBNx{9780262367509}
\newblock
\shownote{Google-Books-ID: RSMuEAAAQBAJ}.


\bibitem[Dai et~al\mbox{.}(2016)]%
        {dai_cfo_2016}
\bibfield{author}{\bibinfo{person}{Zihang Dai}, \bibinfo{person}{Lei Li}, {and}
  \bibinfo{person}{Wei Xu}.} \bibinfo{year}{2016}\natexlab{}.
\newblock \bibinfo{title}{{CFO}: {Conditional} {Focused} {Neural} {Question}
  {Answering} with {Large}-scale {Knowledge} {Bases}}.
\newblock
\newblock
\urldef\tempurl%
\url{https://doi.org/10.48550/arXiv.1606.01994}
\showDOI{\tempurl}
\newblock
\shownote{arXiv:1606.01994 [cs]}.


\bibitem[Dasgupta et~al\mbox{.}(2020)]%
        {dasgupta_improving_2020}
\bibfield{author}{\bibinfo{person}{Shib~Sankar Dasgupta},
  \bibinfo{person}{Michael Boratko}, \bibinfo{person}{Dongxu Zhang},
  \bibinfo{person}{Luke Vilnis}, \bibinfo{person}{Xiang~Lorraine Li}, {and}
  \bibinfo{person}{Andrew McCallum}.} \bibinfo{year}{2020}\natexlab{}.
\newblock \showarticletitle{Improving {Local} {Identifiability} in
  {Probabilistic} {Box} {Embeddings}}.
\newblock \bibinfo{journal}{\emph{NeurIPS 2020 (Virtual)}}
  (\bibinfo{date}{Oct.} \bibinfo{year}{2020}).
\newblock
\urldef\tempurl%
\url{https://doi.org/10.48550/arXiv.2010.04831}
\showDOI{\tempurl}


\bibitem[Dasgupta et~al\mbox{.}(2021)]%
        {dasgupta_box--box_2021}
\bibfield{author}{\bibinfo{person}{Shib~Sankar Dasgupta},
  \bibinfo{person}{Xiang~Lorraine Li}, \bibinfo{person}{Michael Boratko},
  \bibinfo{person}{Dongxu Zhang}, {and} \bibinfo{person}{Andrew McCallum}.}
  \bibinfo{year}{2021}\natexlab{}.
\newblock \showarticletitle{Box-{To}-{Box} {Transformations} for {Modeling}
  {Joint} {Hierarchies}}. In \bibinfo{booktitle}{\emph{Proceedings of the 6th
  {Workshop} on {Representation} {Learning} for {NLP} ({RepL4NLP}-2021)}}.
  \bibinfo{publisher}{Association for Computational Linguistics},
  \bibinfo{address}{Online}, \bibinfo{pages}{277--288}.
\newblock
\urldef\tempurl%
\url{https://doi.org/10.18653/v1/2021.repl4nlp-1.28}
\showDOI{\tempurl}


\bibitem[Dürrschnabel et~al\mbox{.}(2019)]%
        {durrschnabel_fca2vec_2019}
\bibfield{author}{\bibinfo{person}{Dominik Dürrschnabel}, \bibinfo{person}{Tom
  Hanika}, {and} \bibinfo{person}{Maximilian Stubbemann}.}
  \bibinfo{year}{2019}\natexlab{}.
\newblock \bibinfo{title}{{FCA2VEC}: {Embedding} {Techniques} for {Formal}
  {Concept} {Analysis}}.
\newblock
\newblock
\urldef\tempurl%
\url{https://doi.org/10.48550/arXiv.1911.11496}
\showDOI{\tempurl}
\newblock
\shownote{arXiv:1911.11496 [cs, stat]}.


\bibitem[Fellbaum(2012)]%
        {chapelle_wordnet_2012}
\bibfield{author}{\bibinfo{person}{Christiane Fellbaum}.}
  \bibinfo{year}{2012}\natexlab{}.
\newblock \showarticletitle{{WordNet}}.
\newblock In \bibinfo{booktitle}{\emph{The {Encyclopedia} of {Applied}
  {Linguistics}}}, \bibfield{editor}{\bibinfo{person}{Carol Chapelle}} (Ed.).
  \bibinfo{publisher}{John Wiley \& Sons, Inc.}, \bibinfo{address}{Hoboken, NJ,
  USA}, \bibinfo{pages}{wbeal1285}.
\newblock
\showISBNx{9781405198431}
\urldef\tempurl%
\url{https://doi.org/10.1002/9781405198431.wbeal1285}
\showDOI{\tempurl}


\bibitem[Galárraga et~al\mbox{.}(2013)]%
        {galarraga_amie_2013}
\bibfield{author}{\bibinfo{person}{Luis~Antonio Galárraga},
  \bibinfo{person}{Christina Teflioudi}, \bibinfo{person}{Katja Hose}, {and}
  \bibinfo{person}{Fabian Suchanek}.} \bibinfo{year}{2013}\natexlab{}.
\newblock \showarticletitle{{AMIE}: association rule mining under incomplete
  evidence in ontological knowledge bases}. In
  \bibinfo{booktitle}{\emph{Proceedings of the 22nd international conference on
  {World} {Wide} {Web}}} \emph{(\bibinfo{series}{{WWW} '13})}.
  \bibinfo{publisher}{Association for Computing Machinery},
  \bibinfo{address}{New York, NY, USA}, \bibinfo{pages}{413--422}.
\newblock
\showISBNx{9781450320351}
\urldef\tempurl%
\url{https://doi.org/10.1145/2488388.2488425}
\showDOI{\tempurl}


\bibitem[Ganea et~al\mbox{.}(2018)]%
        {ganea_hyperbolic_2018}
\bibfield{author}{\bibinfo{person}{Octavian-Eugen Ganea}, \bibinfo{person}{Gary
  Bécigneul}, {and} \bibinfo{person}{Thomas Hofmann}.}
  \bibinfo{year}{2018}\natexlab{}.
\newblock \showarticletitle{Hyperbolic {Entailment} {Cones} for {Learning}
  {Hierarchical} {Embeddings}}.
\newblock \bibinfo{journal}{\emph{International Conference on Machine Learning
  (ICML) Stockholm, Sweden.}} (\bibinfo{date}{April} \bibinfo{year}{2018}).
\newblock
\urldef\tempurl%
\url{https://doi.org/10.48550/arXiv.1804.01882}
\showDOI{\tempurl}


\bibitem[Ganter and Wille(1999)]%
        {ganter_formal_1999}
\bibfield{author}{\bibinfo{person}{Bernhard Ganter} {and}
  \bibinfo{person}{Rudolf Wille}.} \bibinfo{year}{1999}\natexlab{}.
\newblock \bibinfo{booktitle}{\emph{Formal {Concept} {Analysis}}}.
\newblock \bibinfo{publisher}{Springer}, \bibinfo{address}{Berlin, Heidelberg}.
\newblock
\showISBNx{9783540627715 9783642598302}
\urldef\tempurl%
\url{https://doi.org/10.1007/978-3-642-59830-2}
\showDOI{\tempurl}


\bibitem[Grover and Leskovec(2016)]%
        {grover_node2vec_2016}
\bibfield{author}{\bibinfo{person}{Aditya Grover} {and} \bibinfo{person}{Jure
  Leskovec}.} \bibinfo{year}{2016}\natexlab{}.
\newblock \showarticletitle{node2vec: {Scalable} {Feature} {Learning} for
  {Networks}}.
\newblock  (\bibinfo{date}{July} \bibinfo{year}{2016}).
\newblock
\urldef\tempurl%
\url{https://doi.org/10.48550/arXiv.1607.00653}
\showDOI{\tempurl}


\bibitem[Guo et~al\mbox{.}(2019)]%
        {guo_learning_2019}
\bibfield{author}{\bibinfo{person}{Lingbing Guo}, \bibinfo{person}{Zequn Sun},
  {and} \bibinfo{person}{Wei Hu}.} \bibinfo{year}{2019}\natexlab{}.
\newblock \bibinfo{title}{Learning to {Exploit} {Long}-term {Relational}
  {Dependencies} in {Knowledge} {Graphs}}.
\newblock
\newblock
\urldef\tempurl%
\url{https://doi.org/10.48550/arXiv.1905.04914}
\showDOI{\tempurl}
\newblock
\shownote{arXiv:1905.04914 [cs]}.


\bibitem[Hasan and Zaki(2011)]%
        {hasan_survey_2011}
\bibfield{author}{\bibinfo{person}{Mohammad~Al Hasan} {and}
  \bibinfo{person}{Mohammed~J. Zaki}.} \bibinfo{year}{2011}\natexlab{}.
\newblock \showarticletitle{A {Survey} of {Link} {Prediction} in {Social}
  {Networks}}.
\newblock In \bibinfo{booktitle}{\emph{Social {Network} {Data} {Analytics}}},
  \bibfield{editor}{\bibinfo{person}{Charu~C. Aggarwal}} (Ed.).
  \bibinfo{publisher}{Springer US}, \bibinfo{address}{Boston, MA},
  \bibinfo{pages}{243--275}.
\newblock
\showISBNx{9781441984623}
\urldef\tempurl%
\url{https://doi.org/10.1007/978-1-4419-8462-3_9}
\showDOI{\tempurl}


\bibitem[Hayashi et~al\mbox{.}(2020)]%
        {hayashi_greedy_2020}
\bibfield{author}{\bibinfo{person}{Katsuhiko Hayashi}, \bibinfo{person}{Koki
  Kishimoto}, {and} \bibinfo{person}{Masashi Shimbo}.}
  \bibinfo{year}{2020}\natexlab{}.
\newblock \showarticletitle{A {Greedy} {Bit}-flip {Training} {Algorithm} for
  {Binarized} {Knowledge} {Graph} {Embeddings}}. In
  \bibinfo{booktitle}{\emph{Findings of the {Association} for {Computational}
  {Linguistics}: {EMNLP} 2020}}, \bibfield{editor}{\bibinfo{person}{Trevor
  Cohn}, \bibinfo{person}{Yulan He}, {and} \bibinfo{person}{Yang Liu}} (Eds.).
  \bibinfo{publisher}{Association for Computational Linguistics},
  \bibinfo{address}{Online}, \bibinfo{pages}{109--114}.
\newblock
\urldef\tempurl%
\url{https://doi.org/10.18653/v1/2020.findings-emnlp.10}
\showDOI{\tempurl}


\bibitem[He et~al\mbox{.}(2015)]%
        {he_learning_2015}
\bibfield{author}{\bibinfo{person}{Shizhu He}, \bibinfo{person}{Kang Liu},
  \bibinfo{person}{Guoliang Ji}, {and} \bibinfo{person}{Jun Zhao}.}
  \bibinfo{year}{2015}\natexlab{}.
\newblock \showarticletitle{Learning to {Represent} {Knowledge} {Graphs} with
  {Gaussian} {Embedding}}. In \bibinfo{booktitle}{\emph{Proceedings of the 24th
  {ACM} {International} on {Conference} on {Information} and {Knowledge}
  {Management}}} \emph{(\bibinfo{series}{{CIKM} '15})}.
  \bibinfo{publisher}{Association for Computing Machinery},
  \bibinfo{address}{New York, NY, USA}, \bibinfo{pages}{623--632}.
\newblock
\showISBNx{9781450337946}
\urldef\tempurl%
\url{https://doi.org/10.1145/2806416.2806502}
\showDOI{\tempurl}


\bibitem[Karpathy and Fei-Fei(2015)]%
        {karpathy_deep_2015}
\bibfield{author}{\bibinfo{person}{Andrej Karpathy} {and} \bibinfo{person}{Li
  Fei-Fei}.} \bibinfo{year}{2015}\natexlab{}.
\newblock \bibinfo{title}{Deep {Visual}-{Semantic} {Alignments} for
  {Generating} {Image} {Descriptions}}.
\newblock
\newblock
\urldef\tempurl%
\url{https://doi.org/10.48550/arXiv.1412.2306}
\showDOI{\tempurl}
\newblock
\shownote{arXiv:1412.2306 [cs]}.


\bibitem[Lai and Hockenmaier(2017)]%
        {lai_learning_2017}
\bibfield{author}{\bibinfo{person}{Alice Lai} {and} \bibinfo{person}{Julia
  Hockenmaier}.} \bibinfo{year}{2017}\natexlab{}.
\newblock \showarticletitle{Learning to {Predict} {Denotational}
  {Probabilities} {For} {Modeling} {Entailment}}. In
  \bibinfo{booktitle}{\emph{Proceedings of the 15th {Conference} of the
  {European} {Chapter} of the {Association} for {Computational} {Linguistics}:
  {Volume} 1, {Long} {Papers}}}, \bibfield{editor}{\bibinfo{person}{Mirella
  Lapata}, \bibinfo{person}{Phil Blunsom}, {and} \bibinfo{person}{Alexander
  Koller}} (Eds.). \bibinfo{publisher}{Association for Computational
  Linguistics}, \bibinfo{address}{Valencia, Spain}, \bibinfo{pages}{721--730}.
\newblock
\urldef\tempurl%
\url{https://aclanthology.org/E17-1068}
\showURL{%
\tempurl}


\bibitem[Lao et~al\mbox{.}(2011)]%
        {lao_random_2011}
\bibfield{author}{\bibinfo{person}{Ni Lao}, \bibinfo{person}{Tom Mitchell},
  {and} \bibinfo{person}{William~W. Cohen}.} \bibinfo{year}{2011}\natexlab{}.
\newblock \showarticletitle{Random {Walk} {Inference} and {Learning} in {A}
  {Large} {Scale} {Knowledge} {Base}}. In \bibinfo{booktitle}{\emph{Proceedings
  of the 2011 {Conference} on {Empirical} {Methods} in {Natural} {Language}
  {Processing}}}. \bibinfo{publisher}{Association for Computational
  Linguistics}, \bibinfo{address}{Edinburgh, Scotland, UK.},
  \bibinfo{pages}{529--539}.
\newblock
\urldef\tempurl%
\url{https://aclanthology.org/D11-1049}
\showURL{%
\tempurl}


\bibitem[Law et~al\mbox{.}(2019)]%
        {law_lorentzian_2019}
\bibfield{author}{\bibinfo{person}{Marc Law}, \bibinfo{person}{Renjie Liao},
  \bibinfo{person}{Jake Snell}, {and} \bibinfo{person}{Richard Zemel}.}
  \bibinfo{year}{2019}\natexlab{}.
\newblock \showarticletitle{Lorentzian {Distance} {Learning} for {Hyperbolic}
  {Representations}}. In \bibinfo{booktitle}{\emph{Proceedings of the 36th
  {International} {Conference} on {Machine} {Learning}}}.
  \bibinfo{publisher}{PMLR}, \bibinfo{pages}{3672--3681}.
\newblock
\urldef\tempurl%
\url{https://proceedings.mlr.press/v97/law19a.html}
\showURL{%
\tempurl}


\bibitem[Li et~al\mbox{.}(2019)]%
        {li_smoothing_2019}
\bibfield{author}{\bibinfo{person}{Xiang Li}, \bibinfo{person}{Luke Vilnis},
  \bibinfo{person}{Dongxu Zhang}, \bibinfo{person}{Michael Boratko}, {and}
  \bibinfo{person}{Andrew McCallum}.} \bibinfo{year}{2019}\natexlab{}.
\newblock \showarticletitle{Smoothing the {Geometry} of {Probabilistic} {Box}
  {Embeddings}}.
\newblock
\urldef\tempurl%
\url{https://openreview.net/forum?id=H1xSNiRcF7}
\showURL{%
\tempurl}


\bibitem[Li et~al\mbox{.}(2021)]%
        {li_discrete_2021}
\bibfield{author}{\bibinfo{person}{Yunqi Li}, \bibinfo{person}{Shuyuan Xu},
  \bibinfo{person}{Bo Liu}, \bibinfo{person}{Zuohui Fu},
  \bibinfo{person}{Shuchang Liu}, \bibinfo{person}{Xu Chen}, {and}
  \bibinfo{person}{Yongfeng Zhang}.} \bibinfo{year}{2021}\natexlab{}.
\newblock \bibinfo{title}{Discrete {Knowledge} {Graph} {Embedding} based on
  {Discrete} {Optimization}}.
\newblock
\newblock
\urldef\tempurl%
\url{https://doi.org/10.48550/arXiv.2101.04817}
\showDOI{\tempurl}
\newblock
\shownote{arXiv:2101.04817 [cs]}.


\bibitem[Lin et~al\mbox{.}(2015)]%
        {lin_learning_2015}
\bibfield{author}{\bibinfo{person}{Yankai Lin}, \bibinfo{person}{Zhiyuan Liu},
  \bibinfo{person}{Maosong Sun}, \bibinfo{person}{Yang Liu}, {and}
  \bibinfo{person}{Xuan Zhu}.} \bibinfo{year}{2015}\natexlab{}.
\newblock \showarticletitle{Learning {Entity} and {Relation} {Embeddings} for
  {Knowledge} {Graph} {Completion}}.
\newblock \bibinfo{journal}{\emph{Proceedings of the AAAI Conference on
  Artificial Intelligence}} \bibinfo{volume}{29}, \bibinfo{number}{1}
  (\bibinfo{date}{Feb.} \bibinfo{year}{2015}).
\newblock
\showISSN{2374-3468, 2159-5399}
\urldef\tempurl%
\url{https://doi.org/10.1609/aaai.v29i1.9491}
\showDOI{\tempurl}


\bibitem[Nakashole et~al\mbox{.}(2012)]%
        {nakashole_patty_2012}
\bibfield{author}{\bibinfo{person}{Ndapandula Nakashole},
  \bibinfo{person}{Gerhard Weikum}, {and} \bibinfo{person}{Fabian Suchanek}.}
  \bibinfo{year}{2012}\natexlab{}.
\newblock \showarticletitle{{PATTY}: a taxonomy of relational patterns with
  semantic types}. In \bibinfo{booktitle}{\emph{Proceedings of the 2012 {Joint}
  {Conference} on {Empirical} {Methods} in {Natural} {Language} {Processing}
  and {Computational} {Natural} {Language} {Learning}}}
  \emph{(\bibinfo{series}{{EMNLP}-{CoNLL} '12})}.
  \bibinfo{publisher}{Association for Computational Linguistics},
  \bibinfo{address}{USA}, \bibinfo{pages}{1135--1145}.
\newblock


\bibitem[Neelakantan et~al\mbox{.}(2015)]%
        {neelakantan_compositional_2015}
\bibfield{author}{\bibinfo{person}{Arvind Neelakantan},
  \bibinfo{person}{Benjamin Roth}, {and} \bibinfo{person}{Andrew McCallum}.}
  \bibinfo{year}{2015}\natexlab{}.
\newblock \bibinfo{title}{Compositional {Vector} {Space} {Models} for
  {Knowledge} {Base} {Completion}}.
\newblock
\newblock
\urldef\tempurl%
\url{https://doi.org/10.48550/arXiv.1504.06662}
\showDOI{\tempurl}
\newblock
\shownote{arXiv:1504.06662 [cs, stat]}.


\bibitem[Nguyen et~al\mbox{.}(2018)]%
        {nguyen_novel_2018}
\bibfield{author}{\bibinfo{person}{Dai~Quoc Nguyen}, \bibinfo{person}{Tu~Dinh
  Nguyen}, \bibinfo{person}{Dat~Quoc Nguyen}, {and} \bibinfo{person}{Dinh
  Phung}.} \bibinfo{year}{2018}\natexlab{}.
\newblock \showarticletitle{A {Novel} {Embedding} {Model} for {Knowledge}
  {Base} {Completion} {Based} on {Convolutional} {Neural} {Network}}. In
  \bibinfo{booktitle}{\emph{Proceedings of the 2018 {Conference} of the {North}
  {American} {Chapter} of the {Association} for {Computational} {Linguistics}:
  {Human} {Language} {Technologies}, {Volume} 2 ({Short} {Papers})}}.
  \bibinfo{pages}{327--333}.
\newblock
\urldef\tempurl%
\url{https://doi.org/10.18653/v1/N18-2053}
\showDOI{\tempurl}
\newblock
\shownote{arXiv:1712.02121 [cs]}.


\bibitem[Nickel and Kiela(2017)]%
        {nickel_poincare_2017}
\bibfield{author}{\bibinfo{person}{Maximillian Nickel} {and}
  \bibinfo{person}{Douwe Kiela}.} \bibinfo{year}{2017}\natexlab{}.
\newblock \showarticletitle{Poincaré {Embeddings} for {Learning}
  {Hierarchical} {Representations}}. In \bibinfo{booktitle}{\emph{Advances in
  {Neural} {Information} {Processing} {Systems}}}, Vol.~\bibinfo{volume}{30}.
  \bibinfo{publisher}{Curran Associates, Inc.}
\newblock
\urldef\tempurl%
\url{https://papers.nips.cc/paper/2017/hash/59dfa2df42d9e3d41f5b02bfc32229dd-Abstract.html}
\showURL{%
\tempurl}


\bibitem[Nickel and Kiela(2018)]%
        {nickel_learning_2018}
\bibfield{author}{\bibinfo{person}{Maximilian Nickel} {and}
  \bibinfo{person}{Douwe Kiela}.} \bibinfo{year}{2018}\natexlab{}.
\newblock \bibinfo{title}{Learning {Continuous} {Hierarchies} in the {Lorentz}
  {Model} of {Hyperbolic} {Geometry}}.
\newblock
\newblock
\urldef\tempurl%
\url{https://doi.org/10.48550/arXiv.1806.03417}
\showDOI{\tempurl}
\newblock
\shownote{arXiv:1806.03417 [cs, stat]}.


\bibitem[Nickel et~al\mbox{.}(2011)]%
        {nickel_three-way_2011}
\bibfield{author}{\bibinfo{person}{Maximilian Nickel}, \bibinfo{person}{Volker
  Tresp}, {and} \bibinfo{person}{Hans-Peter Kriegel}.}
  \bibinfo{year}{2011}\natexlab{}.
\newblock \showarticletitle{A three-way model for collective learning on
  multi-relational data}. In \bibinfo{booktitle}{\emph{Proceedings of the 28th
  {International} {Conference} on {International} {Conference} on {Machine}
  {Learning}}} \emph{(\bibinfo{series}{{ICML}'11})}.
  \bibinfo{publisher}{Omnipress}, \bibinfo{address}{Madison, WI, USA},
  \bibinfo{pages}{809--816}.
\newblock
\showISBNx{9781450306195}


\bibitem[Perozzi et~al\mbox{.}(2014)]%
        {perozzi_deepwalk_2014}
\bibfield{author}{\bibinfo{person}{Bryan Perozzi}, \bibinfo{person}{Rami
  Al-Rfou}, {and} \bibinfo{person}{Steven Skiena}.}
  \bibinfo{year}{2014}\natexlab{}.
\newblock \showarticletitle{{DeepWalk}: {Online} {Learning} of {Social}
  {Representations}}.
\newblock  (\bibinfo{date}{March} \bibinfo{year}{2014}).
\newblock
\urldef\tempurl%
\url{https://doi.org/10.1145/2623330.2623732}
\showDOI{\tempurl}


\bibitem[Ren et~al\mbox{.}(2020)]%
        {ren_query2box_2020}
\bibfield{author}{\bibinfo{person}{Hongyu Ren}, \bibinfo{person}{Weihua Hu},
  {and} \bibinfo{person}{Jure Leskovec}.} \bibinfo{year}{2020}\natexlab{}.
\newblock \bibinfo{title}{Query2box: {Reasoning} over {Knowledge} {Graphs} in
  {Vector} {Space} using {Box} {Embeddings}}.
\newblock
\newblock
\urldef\tempurl%
\url{https://doi.org/10.48550/arXiv.2002.05969}
\showDOI{\tempurl}
\newblock
\shownote{arXiv:2002.05969 [cs, stat]}.


\bibitem[Rudolph(2007)]%
        {rudolph_using_2007}
\bibfield{author}{\bibinfo{person}{Sebastian Rudolph}.}
  \bibinfo{year}{2007}\natexlab{}.
\newblock \showarticletitle{Using {FCA} for {Encoding} {Closure} {Operators}
  into {Neural} {Networks}}. In \bibinfo{booktitle}{\emph{Proceedings of the
  15th international conference on {Conceptual} {Structures}: {Knowledge}
  {Architectures} for {Smart} {Applications}}} \emph{(\bibinfo{series}{{ICCS}
  '07})}. \bibinfo{publisher}{Springer-Verlag}, \bibinfo{address}{Berlin,
  Heidelberg}, \bibinfo{pages}{321--332}.
\newblock
\showISBNx{9783540736806}
\urldef\tempurl%
\url{https://doi.org/10.1007/978-3-540-73681-3_24}
\showDOI{\tempurl}


\bibitem[Schlichtkrull et~al\mbox{.}(2017)]%
        {schlichtkrull_modeling_2017}
\bibfield{author}{\bibinfo{person}{Michael Schlichtkrull},
  \bibinfo{person}{Thomas~N. Kipf}, \bibinfo{person}{Peter Bloem},
  \bibinfo{person}{Rianne van~den Berg}, \bibinfo{person}{Ivan Titov}, {and}
  \bibinfo{person}{Max Welling}.} \bibinfo{year}{2017}\natexlab{}.
\newblock \bibinfo{title}{Modeling {Relational} {Data} with {Graph}
  {Convolutional} {Networks}}.
\newblock
\newblock
\urldef\tempurl%
\url{https://doi.org/10.48550/arXiv.1703.06103}
\showDOI{\tempurl}
\newblock
\shownote{arXiv:1703.06103 [cs, stat]}.


\bibitem[Shang et~al\mbox{.}(2018)]%
        {shang_end--end_2018}
\bibfield{author}{\bibinfo{person}{Chao Shang}, \bibinfo{person}{Yun Tang},
  \bibinfo{person}{Jing Huang}, \bibinfo{person}{Jinbo Bi},
  \bibinfo{person}{Xiaodong He}, {and} \bibinfo{person}{Bowen Zhou}.}
  \bibinfo{year}{2018}\natexlab{}.
\newblock \bibinfo{title}{End-to-end {Structure}-{Aware} {Convolutional}
  {Networks} for {Knowledge} {Base} {Completion}}.
\newblock
\newblock
\urldef\tempurl%
\url{https://doi.org/10.48550/arXiv.1811.04441}
\showDOI{\tempurl}
\newblock
\shownote{arXiv:1811.04441 [cs]}.


\bibitem[Shwartz et~al\mbox{.}(2016)]%
        {shwartz_improving_2016}
\bibfield{author}{\bibinfo{person}{Vered Shwartz}, \bibinfo{person}{Yoav
  Goldberg}, {and} \bibinfo{person}{Ido Dagan}.}
  \bibinfo{year}{2016}\natexlab{}.
\newblock \bibinfo{title}{Improving {Hypernymy} {Detection} with an
  {Integrated} {Path}-based and {Distributional} {Method}}.
\newblock
\newblock
\urldef\tempurl%
\url{https://doi.org/10.48550/arXiv.1603.06076}
\showDOI{\tempurl}
\newblock
\shownote{arXiv:1603.06076 [cs]}.


\bibitem[Suchanek et~al\mbox{.}(2007)]%
        {suchanek_yago_2007}
\bibfield{author}{\bibinfo{person}{Fabian~M. Suchanek},
  \bibinfo{person}{Gjergji Kasneci}, {and} \bibinfo{person}{Gerhard Weikum}.}
  \bibinfo{year}{2007}\natexlab{}.
\newblock \showarticletitle{Yago: a core of semantic knowledge}. In
  \bibinfo{booktitle}{\emph{Proceedings of the 16th international conference on
  {World} {Wide} {Web}}} \emph{(\bibinfo{series}{{WWW} '07})}.
  \bibinfo{publisher}{Association for Computing Machinery},
  \bibinfo{address}{New York, NY, USA}, \bibinfo{pages}{697--706}.
\newblock
\showISBNx{9781595936547}
\urldef\tempurl%
\url{https://doi.org/10.1145/1242572.1242667}
\showDOI{\tempurl}


\bibitem[Sun et~al\mbox{.}(2019)]%
        {sun_rotate_2019}
\bibfield{author}{\bibinfo{person}{Zhiqing Sun}, \bibinfo{person}{Zhi-Hong
  Deng}, \bibinfo{person}{Jian-Yun Nie}, {and} \bibinfo{person}{Jian Tang}.}
  \bibinfo{year}{2019}\natexlab{}.
\newblock \bibinfo{title}{{RotatE}: {Knowledge} {Graph} {Embedding} by
  {Relational} {Rotation} in {Complex} {Space}}.
\newblock
\newblock
\urldef\tempurl%
\url{https://doi.org/10.48550/arXiv.1902.10197}
\showDOI{\tempurl}
\newblock
\shownote{arXiv:1902.10197 [cs, stat]}.


\bibitem[Tang et~al\mbox{.}(2015)]%
        {tang_line_2015}
\bibfield{author}{\bibinfo{person}{Jian Tang}, \bibinfo{person}{Meng Qu},
  \bibinfo{person}{Mingzhe Wang}, \bibinfo{person}{Ming Zhang},
  \bibinfo{person}{Jun Yan}, {and} \bibinfo{person}{Qiaozhu Mei}.}
  \bibinfo{year}{2015}\natexlab{}.
\newblock \showarticletitle{{LINE}: {Large}-scale {Information} {Network}
  {Embedding}}.
\newblock  (\bibinfo{date}{March} \bibinfo{year}{2015}).
\newblock
\urldef\tempurl%
\url{https://doi.org/10.1145/2736277.2741093}
\showDOI{\tempurl}


\bibitem[Trouillon et~al\mbox{.}(2016)]%
        {trouillon_complex_2016}
\bibfield{author}{\bibinfo{person}{Théo Trouillon}, \bibinfo{person}{Johannes
  Welbl}, \bibinfo{person}{Sebastian Riedel}, \bibinfo{person}{Éric Gaussier},
  {and} \bibinfo{person}{Guillaume Bouchard}.} \bibinfo{year}{2016}\natexlab{}.
\newblock \bibinfo{title}{Complex {Embeddings} for {Simple} {Link}
  {Prediction}}.
\newblock
\newblock
\urldef\tempurl%
\url{https://doi.org/10.48550/arXiv.1606.06357}
\showDOI{\tempurl}
\newblock
\shownote{arXiv:1606.06357 [cs, stat]}.


\bibitem[Vendrov et~al\mbox{.}(2015)]%
        {vendrov_order-embeddings_2015}
\bibfield{author}{\bibinfo{person}{Ivan Vendrov}, \bibinfo{person}{Ryan Kiros},
  \bibinfo{person}{Sanja Fidler}, {and} \bibinfo{person}{Raquel Urtasun}.}
  \bibinfo{year}{2015}\natexlab{}.
\newblock \showarticletitle{Order-{Embeddings} of {Images} and {Language}}.
\newblock  (\bibinfo{date}{Nov.} \bibinfo{year}{2015}).
\newblock
\urldef\tempurl%
\url{https://doi.org/10.48550/arXiv.1511.06361}
\showDOI{\tempurl}


\bibitem[Vilnis et~al\mbox{.}(2018)]%
        {vilnis_probabilistic_2018}
\bibfield{author}{\bibinfo{person}{Luke Vilnis}, \bibinfo{person}{Xiang Li},
  \bibinfo{person}{Shikhar Murty}, {and} \bibinfo{person}{Andrew McCallum}.}
  \bibinfo{year}{2018}\natexlab{}.
\newblock \bibinfo{title}{Probabilistic {Embedding} of {Knowledge} {Graphs}
  with {Box} {Lattice} {Measures}}.
\newblock
\newblock
\urldef\tempurl%
\url{http://arxiv.org/abs/1805.06627}
\showURL{%
\tempurl}
\newblock
\shownote{arXiv:1805.06627 [cs, stat]}.


\bibitem[Vinyals et~al\mbox{.}(2015)]%
        {vinyals_show_2015}
\bibfield{author}{\bibinfo{person}{Oriol Vinyals}, \bibinfo{person}{Alexander
  Toshev}, \bibinfo{person}{Samy Bengio}, {and} \bibinfo{person}{Dumitru
  Erhan}.} \bibinfo{year}{2015}\natexlab{}.
\newblock \bibinfo{title}{Show and {Tell}: {A} {Neural} {Image} {Caption}
  {Generator}}.
\newblock
\newblock
\urldef\tempurl%
\url{https://doi.org/10.48550/arXiv.1411.4555}
\showDOI{\tempurl}
\newblock
\shownote{arXiv:1411.4555 [cs]}.


\bibitem[Vrandecic(2012)]%
        {vrandecic_wikidata_2012}
\bibfield{author}{\bibinfo{person}{Denny Vrandecic}.}
  \bibinfo{year}{2012}\natexlab{}.
\newblock \showarticletitle{Wikidata: a new platform for collaborative data
  collection}. In \bibinfo{booktitle}{\emph{Proceedings of the 21st {World}
  {Wide} {Web} {Conference}, {WWW} 2012, {Lyon}, {France}, {April} 16-20, 2012
  ({Companion} {Volume})}}, \bibfield{editor}{\bibinfo{person}{Alain Mille},
  \bibinfo{person}{Fabien Gandon}, \bibinfo{person}{Jacques Misselis},
  \bibinfo{person}{Michael Rabinovich}, {and} \bibinfo{person}{Steffen Staab}}
  (Eds.). \bibinfo{publisher}{ACM}, \bibinfo{pages}{1063--1064}.
\newblock
\urldef\tempurl%
\url{https://doi.org/10.1145/2187980.2188242}
\showDOI{\tempurl}


\bibitem[Wang et~al\mbox{.}(2019)]%
        {wang_learning_2019}
\bibfield{author}{\bibinfo{person}{Meng Wang}, \bibinfo{person}{Haomin Shen},
  \bibinfo{person}{Sen Wang}, \bibinfo{person}{Lina Yao},
  \bibinfo{person}{Yinlin Jiang}, \bibinfo{person}{Guilin Qi}, {and}
  \bibinfo{person}{Yang Chen}.} \bibinfo{year}{2019}\natexlab{}.
\newblock \showarticletitle{Learning to {Hash} for {Efficient} {Search} {Over}
  {Incomplete} {Knowledge} {Graphs}}. In \bibinfo{booktitle}{\emph{2019 {IEEE}
  {International} {Conference} on {Data} {Mining} ({ICDM})}}.
  \bibinfo{publisher}{IEEE}, \bibinfo{address}{Beijing, China},
  \bibinfo{pages}{1360--1365}.
\newblock
\showISBNx{9781728146041}
\urldef\tempurl%
\url{https://doi.org/10.1109/ICDM.2019.00174}
\showDOI{\tempurl}


\bibitem[Wang et~al\mbox{.}(2014)]%
        {wang_knowledge_2014}
\bibfield{author}{\bibinfo{person}{Zhen Wang}, \bibinfo{person}{Jianwen Zhang},
  \bibinfo{person}{Jianlin Feng}, {and} \bibinfo{person}{Zheng Chen}.}
  \bibinfo{year}{2014}\natexlab{}.
\newblock \showarticletitle{Knowledge {Graph} {Embedding} by {Translating} on
  {Hyperplanes}}.
\newblock \bibinfo{journal}{\emph{Proceedings of the AAAI Conference on
  Artificial Intelligence}} \bibinfo{volume}{28}, \bibinfo{number}{1}
  (\bibinfo{date}{June} \bibinfo{year}{2014}).
\newblock
\showISSN{2374-3468}
\urldef\tempurl%
\url{https://doi.org/10.1609/aaai.v28i1.8870}
\showDOI{\tempurl}


\bibitem[Wu et~al\mbox{.}(2012)]%
        {wu_probase_2012}
\bibfield{author}{\bibinfo{person}{Wentao Wu}, \bibinfo{person}{Hongsong Li},
  \bibinfo{person}{Haixun Wang}, {and} \bibinfo{person}{Kenny~Q. Zhu}.}
  \bibinfo{year}{2012}\natexlab{}.
\newblock \showarticletitle{Probase: a probabilistic taxonomy for text
  understanding}. In \bibinfo{booktitle}{\emph{Proceedings of the 2012 {ACM}
  {SIGMOD} {International} {Conference} on {Management} of {Data}}}
  \emph{(\bibinfo{series}{{SIGMOD} '12})}. \bibinfo{publisher}{Association for
  Computing Machinery}, \bibinfo{address}{New York, NY, USA},
  \bibinfo{pages}{481--492}.
\newblock
\showISBNx{9781450312479}
\urldef\tempurl%
\url{https://doi.org/10.1145/2213836.2213891}
\showDOI{\tempurl}


\bibitem[Yao et~al\mbox{.}(2019)]%
        {yao_kg-bert_2019}
\bibfield{author}{\bibinfo{person}{Liang Yao}, \bibinfo{person}{Chengsheng
  Mao}, {and} \bibinfo{person}{Yuan Luo}.} \bibinfo{year}{2019}\natexlab{}.
\newblock \bibinfo{title}{{KG}-{BERT}: {BERT} for {Knowledge} {Graph}
  {Completion}}.
\newblock
\newblock
\urldef\tempurl%
\url{https://doi.org/10.48550/arXiv.1909.03193}
\showDOI{\tempurl}
\newblock
\shownote{arXiv:1909.03193 [cs]}.


\bibitem[Zhang et~al\mbox{.}(2022)]%
        {zhang_modeling_2022}
\bibfield{author}{\bibinfo{person}{Dongxu Zhang}, \bibinfo{person}{Michael
  Boratko}, \bibinfo{person}{Cameron Musco}, {and} \bibinfo{person}{Andrew
  McCallum}.} \bibinfo{year}{2022}\natexlab{}.
\newblock \showarticletitle{Modeling {Transitivity} and {Cyclicity} in
  {Directed} {Graphs} via {Binary} {Code} {Box} {Embeddings}}.
\newblock \bibinfo{journal}{\emph{Advances in Neural Information Processing
  Systems}}  \bibinfo{volume}{35} (\bibinfo{date}{Dec.} \bibinfo{year}{2022}),
  \bibinfo{pages}{10587--10599}.
\newblock
\urldef\tempurl%
\url{https://papers.nips.cc/paper_files/paper/2022/hash/44a1f18afd6d5cc34d7e5c3d8a80f63b-Abstract-Conference.html}
\showURL{%
\tempurl}


\bibitem[Zhang and Saab(2021)]%
        {zhang_faster_2021}
\bibfield{author}{\bibinfo{person}{Jinjie Zhang} {and} \bibinfo{person}{Rayan
  Saab}.} \bibinfo{year}{2021}\natexlab{}.
\newblock \bibinfo{title}{Faster {Binary} {Embeddings} for {Preserving}
  {Euclidean} {Distances}}.
\newblock
\newblock
\urldef\tempurl%
\url{https://doi.org/10.48550/arXiv.2010.00712}
\showDOI{\tempurl}
\newblock
\shownote{arXiv:2010.00712 [cs, math, stat]}.


\end{thebibliography}

\appendix

\section{Proof of \name's Convergence to Local Optimal Solution \label{sec:proof of convergence}} 

\begin{lemma}
    When bias $b_\ell = 0$, for any word $a$, if the $j$'th bit in the binary representation vector $\mathbf{a}$ is updated by \name's probabilistic flipping (keeping the remaining bits the same), the sub-component of the positive loss corresponding to pairs $(a,\cdot)$ whose first element is $a$ decreases.
\end{lemma}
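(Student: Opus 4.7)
The plan is to isolate the sub-component
\[
L_{a,\cdot} \;:=\; \alpha \sum_{b:(a,b)\in P}\,\sum_{j'} \mathbf{1}_{(\mathbf{a}_{j'},\mathbf{b}_{j'})=(0,1)}
\]
of $Loss_P$ and to show that flipping the single bit $\mathbf{a}_j$ under \name's update rule cannot increase it in expectation. Since flipping only $\mathbf{a}_j$ leaves all inner-sum terms with $j'\ne j$ untouched, it is enough to track the scalar quantity $\ell := \alpha\sum_{b:(a,b)\in P} \mathbf{1}_{(\mathbf{a}_j,\mathbf{b}_j)=(0,1)}$ and argue that $\mathbb{E}[\ell^{\text{new}}-\ell^{\text{old}}]\le 0$.

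First I would compute the deterministic finite difference $\ell^{\text{new}}-\ell^{\text{old}}$ caused by flipping $\mathbf{a}_j$, via the four-case enumeration that already populates Table~\ref{tab:grad-positive}. For each pair $(a,b)\in P$, the contribution to $\ell$ changes by $+1$ when $(\mathbf{a}_j,\mathbf{b}_j)=(1,1)$ (the flip introduces a new $(0,1)$ violation), by $-1$ when $(\mathbf{a}_j,\mathbf{b}_j)=(0,1)$ (the flip removes an existing violation), and by $0$ in the remaining two rows. Summing over $b$ gives $\ell^{\text{new}}-\ell^{\text{old}} = -\alpha\sum_{b:(a,b)\in P}\mathbf{b}_j(1-2\mathbf{a}_j)$, which is exactly $-\Delta_{\mathbf{a}_j}L_{a,\cdot}$ as expressed in Eq.~\ref{eqn:grad-pos-a}. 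This identifies the partial gradient with the negative of the true one-bit loss change, so the sign convention ``positive gradient means flipping helps'' is rigorously justified here.

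Next I would plug this into the flip rule. With $b_\ell=0$, the flip probability reduces to $p = \max\{0,\tfrac{1}{2}\tanh(2r_\ell\,\Delta_{\mathbf{a}_j}L_{a,\cdot})\}$. Because $\tanh$ is strictly increasing with $\tanh(0)=0$, we have $p>0$ iff $\Delta_{\mathbf{a}_j}L_{a,\cdot}>0$, and $p=0$ whenever the partial gradient is non-positive. Combining with the finite-difference identity yields
\[
\mathbb{E}[\ell^{\text{new}}-\ell^{\text{old}}] \;=\; -p\,\Delta_{\mathbf{a}_j}L_{a,\cdot} \;\le\; 0,
\]
with strict inequality exactly when $\Delta_{\mathbf{a}_j}L_{a,\cdot}>0$. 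Lifting this from $\ell$ back to $L_{a,\cdot}$ is immediate, since the $j'\ne j$ terms are unchanged, and the lemma follows.

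The main obstacle is interpretive rather than computational: one must read ``decreases'' in the monotone/expected sense (non-increasing, with strict decrease exactly when a positive partial gradient exists) and be careful that the gradient used to drive the flip probability is the partial gradient restricted to pairs of the form $(a,\cdot)$ from Eq.~\ref{eqn:grad-pos-a}, not the combined gradient used elsewhere in the algorithm; otherwise the matching between finite difference and flip probability breaks. The companion Lemma for $\mathbf{b}_j$ is proved symmetrically via Eq.~\ref{eqn:grad-pos-b}, and the negative-loss Lemma needs additional case work on $G_{a',b'}\in\{0,1,{>}1\}$ using Eqs.~\ref{eqn:grad-neg-a}--\ref{eqn:grad-neg-b-j}, but it follows the same template: the $\max\{0,\cdot\}$ truncation of the $\tanh$ rule is precisely the mechanism that prevents any harmful flip from being executed with positive probability, which is what drives the overall local-optimality theorem.
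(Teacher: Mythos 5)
Your proposal is correct and follows essentially the same route as the paper's proof: enumerate the four $(\mathbf{a}_j,\mathbf{b}_j)$ cases to identify $\Delta_{\mathbf{a}_j}Loss_P$ with the negative of the one-bit change in the $(a,\cdot)$ sub-loss, then observe that with $b_\ell=0$ the $\max\{0,\tfrac12\tanh(\cdot)\}$ rule assigns positive flip probability only when that partial gradient is positive. The only cosmetic difference is that you state the conclusion as an expected non-increase, whereas the paper argues that any flip that actually occurs decreases the sub-loss deterministically (by $k\alpha$ for $k$ violating pairs); both readings rest on the identical computation.
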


\begin{proof}
Let $P_{a,\cdot}$ be the set of positive data instances $(a,b)$ where the first entity is the given $a$, and let $\mathbf{a}$ be the $d$-dimensional embedding vector of $a$. Define $L(\mathbf{a})$ as the component of loss function associated with $a$. Suppose in an iteration, we randomly flip bit $j$ of $\mathbf{a}$.
To compute this probability, \text{\name} computes the $\Delta_{\mathbf{a}_j} Loss_P$, which is $L(\mathbf{a}) - L(\hat{\mathbf{a}})$ , where $\hat{\mathbf{a}}$ is the same as $\mathbf{a}$ except that the bit value in the $j$'th position is different.
(Recall that we define our gradient to be \emph{positive} when flipping bit $j$ improves our model, thus \emph{decreasing} the loss function.)
Based on Eq. \ref{eqn:grad-pos-a}, this gradient value is $+1$ only for the case when a constraint $\mathbf {a}_j \rightarrow \mathbf{b}_j$ is violated (where $b$ is the other element in a training pair), i.e. $\mathbf{a}_j = 0$ but $\mathbf{b}_j = 1$ (see the 3rd column of Table \ref{tab:grad-positive}).
For the other three choices of $\mathbf a_j$ and $\mathbf b_j$, $(0,0), (1,0), (1,1)$, the contribution to gradient value is 0, 0, and $-1$ respectively. Thus, the gradient will only be positive when $\mathbf{a}_j = 0$ (and $\mathbf{b}_j = 1$ for some $b$).
Using Line 7 of Algorithm \ref{alg:training} for $b_\ell = 0$, the flip probability can only be positive in the $\mathbf{a}_j = 0$ case, and with the flip the loss function value decreases by $k\alpha$ (through Eq. \ref{eqn:loss-p}), where $k$ is the number of pairs in $S$ that violate implication constraint with $a$ in the left side.
 In all scenarios, the loss value decreases in the successive iteration.
\end{proof} 

\begin{lemma}
    When bias $b_\ell = 0$, for any word $b$, if the $j$'th bit in the binary representation vector of $b$ is updated by \name's probabilistic flipping (keeping the remaining bits the same), the sub-component of the loss function corresponding to positive pairs $(\cdot, b)$ decreases.
\end{lemma}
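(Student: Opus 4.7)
\textbf{Proof Proposal for Lemma \ref{lem:flip-pos-b}.} My plan is to mirror the argument of Lemma \ref{lem:flip-pos-a}, but with the roles of the hyponym and hypernym swapped, and with care taken because a single $\mathbf{b}_j$ participates in constraints with every hyponym $a$ such that $(a,b) \in P$. Concretely, let $P_{\cdot,b} = \{(a,b) \in P\}$ and let $L(\mathbf{b})$ denote the contribution to $Loss_P$ coming from pairs in $P_{\cdot,b}$, namely $L(\mathbf{b}) = \alpha \sum_{a: (a,b) \in P} \bigl|\{k : (\mathbf{a}_k,\mathbf{b}_k) = (0,1)\}\bigr|$. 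I will then show that \name's probabilistic update of bit $\mathbf{b}_j$ never increases $L(\mathbf{b})$ and strictly decreases it whenever a flip actually occurs.

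The key steps are as follows. First, I would invoke Eq.~\eqref{eqn:grad-pos-b}, which gives $\Delta_{\mathbf{b}_j} Loss_P = \alpha\sum_{a:(a,b)\in P}(1-\mathbf{a}_j)(2\mathbf{b}_j - 1)$, and the flip rule from Section~\ref{sec:flip_probability} with $b_\ell = 0$, which fires with nonzero probability only when this gradient is strictly positive. Next, I would split into two cases on the current value of $\mathbf{b}_j$. If $\mathbf{b}_j = 0$, the factor $(2\mathbf{b}_j - 1) = -1$ forces $\Delta_{\mathbf{b}_j}Loss_P \le 0$, so the flip probability vanishes, no update occurs, and the loss is unchanged. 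If $\mathbf{b}_j = 1$, the gradient equals $\alpha\cdot|\{a:(a,b)\in P,\,\mathbf{a}_j = 0\}|$; when this quantity is zero, again no flip is possible, while when it is positive, flipping $\mathbf{b}_j$ from $1$ to $0$ removes exactly one $(0,1)$ violation from each such pair $(a,b)$ and introduces no new violations (it can only create $(0,0)$ or $(1,0)$ bit patterns, both of which incur zero loss per Table~\ref{tab:grad-positive}). Hence $L$ drops by exactly $\Delta_{\mathbf{b}_j}Loss_P > 0$.

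The main subtlety, and the only real obstacle, is that the gradient at $\mathbf{b}_j$ aggregates over many hyponym pairs, so one must verify that flipping really does produce the decrease promised by the finite-difference interpretation of the ``gradient.'' This is where the asymmetry of the truth table is essential: when $\mathbf{b}_j = 1$, \emph{every} pair $(a,b)\in P_{\cdot,b}$ either gains $-1$ to the loss (if $\mathbf{a}_j = 0$) or contributes $0$ (if $\mathbf{a}_j = 1$), so no pair is made worse by the flip. I would spell this out explicitly by summing the per-pair changes given by row $(0,1) \to (0,0)$ and row $(1,1) \to (1,0)$ of Table~\ref{tab:grad-positive}. Combining the two cases yields that $L(\mathbf{b})$ is monotone non-increasing under the update, and strictly decreasing on any iteration in which the flip is actually taken, which is precisely the claim (with the understood caveat, as in Lemma~\ref{lem:flip-neg}, that ``decreases'' should be read as ``decreases or remains the same,'' the latter only when the probabilistic update makes no change).
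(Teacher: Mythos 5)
Your proposal is correct and follows essentially the same route as the paper, which simply remarks that the argument of Lemma~\ref{lem:flip-pos-a} carries over verbatim using Eq.~\eqref{eqn:grad-pos-b} in place of Eq.~\eqref{eqn:grad-pos-a}; your case split on $\mathbf{b}_j$ and the check that no pair is made worse by the $1\to 0$ flip is exactly that argument spelled out in detail. Your closing caveat that ``decreases'' should be read as ``decreases or remains the same'' when no flip fires is a fair and slightly more careful reading than the paper's own wording.
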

\begin{proof}
The proof is identical to the proof of Lemma \ref{lem:flip-pos-a}, except that we use gradient value in Eq. \ref{eqn:grad-pos-b} instead of Eq. \ref{eqn:grad-pos-a}. In this case also when only one position of $b$'s embedding vector is flipped probabilistically, the loss function value decreases. 
\end{proof}

\begin{lemma}
    When bias $b_\ell = 0$, given a collection of negative data instances, say, $(a',b')$, if the $j$'th bit in the vectors of $a'$ or $b'$ independently (not simultaneously) is updated by \name's probabilistic flipping (keeping the remaining bits same), the loss function value decreases or remains the same in the successive iteration.
\end{lemma}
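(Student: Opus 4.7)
The plan is to mirror the proofs of Lemmas~\ref{lem:flip-pos-a} and~\ref{lem:flip-pos-b} by showing that the gradient $\Delta_{\mathbf{a}'_j} Loss_N$ (resp.\ $\Delta_{\mathbf{b}'_j} Loss_N$) returned by Algorithm~\ref{alg:gradient} equals precisely the decrease in the negative loss that would result from flipping bit $j$ of $\mathbf{a}'$ (resp.\ $\mathbf{b}'$), summed across every negative pair in which the word appears. Once this identification is in place, the lemma will follow by inspecting the flip probability formula at $b_\ell = 0$, which clips to zero whenever the gradient is non-positive, so a flip can occur only when it strictly decreases $Loss_N$.

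First I would partition the negative pairs that contain the word being updated into three groups, indexed by the counter $G_{a',b'} = |\{j : (\mathbf{a}'_j, \mathbf{b}'_j) = (0,1)\}|$: violated pairs with $G = 0$, minimally-protected pairs with $G = 1$, and amply-protected pairs with $G > 1$. I would then tabulate, for a single-bit flip of $\mathbf{a}'_j$, the change in per-pair loss against each possible $(\mathbf{a}'_j, \mathbf{b}'_j)$ configuration within each group. The only flips that shift the per-pair loss at all are: flipping a $(1,1)$ bit in a $G = 0$ pair, which creates a $(0,1)$ and drops the pair's loss by $\beta$; and flipping the unique $(0,1)$ bit in a $G = 1$ pair, which destroys the lone good bit and raises the pair's loss by $\beta$. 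The symmetric analysis for $\mathbf{b}'_j$ singles out $(0,0)$ flips in $G = 0$ pairs and the unique $(0,1)$ bit in $G = 1$ pairs. Every other configuration, and every configuration in $G > 1$ pairs, leaves the loss unchanged, because a single bit flip can shift $G$ by at most one and the indicator in Equation~\ref{eqn:loss-n} depends only on whether $G \ge 1$.

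Matching these per-pair contributions against the summands in Equations~\ref{eqn:grad-neg-a}--\ref{eqn:grad-neg-b-j} verifies that $\Delta_{\mathbf{a}'_j} Loss_N$ is exactly the net decrease in $Loss_N$ caused by flipping bit $j$ of $\mathbf{a}'$, and analogously for $\mathbf{b}'_j$. I would close the argument by plugging into the flip probability: with $b_\ell = 0$, $\mathrm{FlipProb}[w,j] = \max\{0, \tfrac{1}{2}\tanh(2 r_\ell \Delta)\}$ equals $0$ whenever $\Delta \le 0$, so no flip occurs and $Loss_N$ is unchanged; when $\Delta > 0$, the bit flips with positive probability and, by the identification just established, $Loss_N$ strictly decreases by $\Delta \cdot \beta$ on the event that a flip occurs, and is unchanged otherwise. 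Both possibilities are consistent with the lemma's conclusion.

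The main obstacle is the case-by-case bookkeeping rather than any deep idea: one must check carefully that Equations~\ref{eqn:grad-neg-a-j} and~\ref{eqn:grad-neg-b-j} isolate the unique index where the lone $(0,1)$ bit lives in a $G = 1$ pair, that the $G = 0$ contributions from Equations~\ref{eqn:grad-neg-a} and~\ref{eqn:grad-neg-b} cannot accidentally double-count bits that lie outside the $(1,1)$ or $(0,0)$ patterns, and that $G > 1$ pairs contribute exactly zero to both the algebraic gradient and the true loss change. Once that matching is pinned down, the probabilistic step is immediate thanks to the $\max\{0, \cdot\}$ clipping, which is precisely the device that prevents any loss-increasing flip from ever being realized.
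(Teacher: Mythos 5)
Your proposal is correct and follows essentially the same route as the paper, which simply invokes the argument of Lemma~\ref{lem:flip-pos-a} with Equations~\ref{eqn:grad-neg-a}--\ref{eqn:grad-neg-b-j} in place of the positive-pair gradients; you merely carry out the case analysis over $G=0$, $G=1$, and $G>1$ explicitly, which the paper leaves implicit. One trivial bookkeeping slip: since Equations~\ref{eqn:grad-neg-a}--\ref{eqn:grad-neg-b-j} already carry the factor $\beta$, the realized decrease in $Loss_N$ is $\Delta$ itself, not $\Delta\cdot\beta$.
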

\begin{proof}
The proof is identical to proof of Lemma 1, except that we use gradient value in Eq. \ref{eqn:grad-neg-a} (or Eq. \ref{eqn:grad-neg-a-j}) for the case of $a'$, and gradient value of Eq. \ref{eqn:grad-neg-b} (or Eq. \ref{eqn:grad-neg-b-j}) for $b'$, and the loss function value decreases through Eq. \ref{eqn:loss-n}.
\end{proof}
These proofs also apply if $r_\ell \alpha \geq b_\ell > 0$ and $r_\ell \beta \geq b_\ell > 0$. In that case, we can flip a bit with zero gradient. Such flips do not immediately increase or decrease the loss function; however, they can allow {\name} to improve from a weak local optimum. In our experiments, $r_\ell \alpha$ and $r_\ell \beta$ are much larger than $b_\ell$, so our algorithm prioritizes actual improvements over zero-gradient steps.

\begin{theorem}
When bias $b_l = 0$, if Line 8 of Algorithm \ref{alg:training} is executed sequentially for each index value j for each of the entities, \text{\name} reaches a local optimal solution considering a 1-hamming distance neighborhood.
\end{theorem}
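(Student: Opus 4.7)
My plan is to leverage the three preceding lemmas and recast the sequential bit-update procedure as coordinate descent on the Boolean hypercube $\{0,1\}^{n \times d}$, where local optimality in a 1-Hamming neighborhood reduces to the claim that no single bit-flip can strictly decrease the loss. First I would observe that each term in the loss (Equations~\ref{eqn:loss-p} and \ref{eqn:loss-n}) contributed by a pair $(a,b)$ at bit index $j$ depends only on $\mathbf{a}_j$ and $\mathbf{b}_j$, so the change in loss caused by flipping a single bit $(w,j)$ is exactly the negation of the gradient proxy $\Delta[w,j]$ derived in Section~\ref{sec:gradient-derivation}. This is precisely what Lemmas~\ref{lem:flip-pos-a}, \ref{lem:flip-pos-b}, and \ref{lem:flip-neg} verify by case analysis over Tables~\ref{tab:grad-positive} and \ref{tab:grad-negative}, covering all four roles a bit can play (left side of a positive pair, right side of a positive pair, and either side of a negative pair).

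Second, I would argue finite-time termination. With $b_\ell = 0$, the expression $\mathrm{FlipProb}[w,j] = \max\{0, \tfrac{1}{2}\tanh(2 r_\ell \Delta[w,j])\}$ is strictly positive if and only if $\Delta[w,j] > 0$, since $\tanh$ is monotone and the $\max\{0,\cdot\}$ truncation kills non-positive arguments. By the three lemmas, any realized flip therefore strictly decreases the integer-valued loss by at least one unit (since $\alpha,\beta \in \mathbb{Z}_{>0}$ and each violated constraint contributes an integer weight). The loss is non-negative and integer-valued, so the number of realized flips over the entire run is bounded above by the initial loss, and after finitely many sweeps the algorithm must reach a configuration in which a full sequential sweep produces no flips.

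Third, I would verify that such a fixed configuration is 1-Hamming locally optimal. At the fixed configuration, $\mathrm{FlipProb}[w,j] = 0$ for every $(w,j)$, which by the $\tanh$ formula forces $\Delta[w,j] \le 0$ everywhere. A 1-Hamming neighbor of the current embedding matrix $B$ is obtained by flipping exactly one bit $(w^\star, j^\star)$; by the step-one observation, the resulting change in loss equals $-\Delta[w^\star, j^\star] \ge 0$. Thus no neighbor has strictly smaller loss, which is the definition of 1-Hamming local optimality.

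The main obstacle I anticipate is making the ``no-flip sweep'' argument water-tight despite the randomized sequential update: within a single sweep the gradient at bit $(w,j)$ may depend on bits updated earlier in that sweep, so ``no flip occurs'' must be interpreted as an event rather than a deterministic termination condition. Handling this cleanly requires conditioning on the event that an entire sweep produces zero flips and noting that, since each individual flip happens with strictly positive probability whenever its local gradient is positive, the event ``no flip this sweep'' forces $\Delta[w,j] \le 0$ for every $(w,j)$ at the configuration at the start of the sweep (the configuration is unchanged throughout). Combined with the coordinate-wise additivity of the loss, this is exactly the 1-Hamming local optimality condition, completing the argument.
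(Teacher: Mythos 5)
Your proposal is correct and follows essentially the same route as the paper's proof: both identify the single-bit loss change with $-\Delta[w,j]$ via the three lemmas, use monotone decrease of the integer-valued loss over a finite embedding space to force termination, and observe that with $b_\ell = 0$ the flip probability vanishes exactly when $\Delta[w,j] \le 0$, which is the 1-Hamming local optimality condition. Your extra care about the stochastic ``no-flip sweep'' event and the integrality of the loss tightens the termination step somewhat relative to the paper's wording, but it is the same argument, not a different one.
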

\begin{proof}Using earlier Lemmas, each bit flipping in any embedding vector of any of the entities, either decreases the loss function or keeps it the same. Each sub-component of the gradient is exactly the negative of the change in that sub-component of the loss when the corresponding bit is flipped; hence, if $\Delta[a,j] > 0$, then $Loss$ will decrease by exactly $\Delta[a,j]$ when $\mathbf{a}_j$ is flipped, and if $\Delta[a,j] \leq 0$, then $\mathrm{FlipProb}(a,j)$ will be 0 and $\mathbf{a}_j$ will not be flipped.

When Line 8 of Algorithm 2 is executed sequentially for each index $j$ (only one change in one iteration) for each of the entities, the loss function value monotonically decreases in each successive iteration. Because the space of embeddings is finite, eventually {\name} will reach a point where no single bit flip improves the value of Loss. Now, if the bias $b_l = 0$, for each entity, the probability of bit-flipping for each index is computed to be 0 (by Line 7 in Algorithm \ref{alg:training}), so none of the embeddings change any further and \text{\name} reaches a local optimal solution considering a 1-hamming distance neighborhood. In other words, for every entity $a$, if we change any single bit of $a$'s embedding, the original embedding of $a$ is guaranteed to be at least as good as the changed embedding.
\end{proof}

When we change only one bit at a time, keeping everything else the same (as in the proof), our optimization algorithm becomes a greedy hill climbing algorithm. However, this would make \text{\name} extremely slow to converge, and it may get stuck in a bad local optimal solution. Thus, we allow all bits to change simultaneously, so it behaves like gradient descent: Suppose $\theta$ is an embedding vector of an entity and $L(\theta)$ is the component of loss function associated with this entity. For minimizing $L(\theta)$, at each iteration, a hill climbing method would adjust a single element in $\theta$ to decrease $L(\theta)$; on the other hand, gradient descent will adjust all values in $\theta$ in each iteration by using $\theta^{new} = \theta^{old} -\alpha\Delta_\theta L(\theta^{old})$. During early iterations, \text{\name } works like gradient descent, but as iteration progresses, it behaves more like a hill climbing method as gradient values for most bit positions decrease, causing fewer bits to flip. 

\section{Pseudo-code of \name \label{sec:pseudo-code}} 

Pseudo-code of \name\ is shown in Algorithm \ref{alg:training}. We initialize the embedding matrix with 0 vector. The optimization scheme loops for at most $T$ epochs (for loop in Line 4-17), possibly updating bit vectors of each vocabulary word in each iteration by flipping bits with probability in proportion to gradient computed through Algorithm \ref{alg:gradient}. The F1-score metric, defined as 
$\frac{2*TP}{2*TP+FP+FN}$ is computed at the end of each epochs and best F1-score is recorded. We exit early if no improvement is achieved over a sequence of epochs. 

\begin{algorithm}[htb]
    \caption{Gradient Computation}
    \label{alg:gradient}
    \begin{algorithmic}[1]
        \small
        \Require Zero-one Embedding Matrix $B$ of size $n \times d$ initialized with all 0; positive Is-A relation set $P = \{ (a^i, b^i) \}_{i=1}^{m}$; negative set $N = \{(a'^i, b'^i)\}_{i=1}^{m'}$; positive and negative sample weights $\alpha,\beta$
        \State $\Delta^+ \leftarrow$ zero matrix, same size as $B$
        \State $\Delta^- \leftarrow$ zero matrix, same size as $B$
        \For {$(a,b) \in P$} \Comment{$*$ is element-wise product}
            \State $\Delta^+[a,:] \leftarrow \Delta^+[a,:] + B[b,:] * (1 - 2B[a,:])$
            \State $\Delta^+[b,:] \leftarrow \Delta^+[b,:] + (1 - B[a,:]) * (2B[b,:] - 1)$
        \EndFor
        \For {$(a',b') \in N$}
            \State $\mathbf G \leftarrow B[b',:] * (1 - B[a',:])$ \Comment{``good'' bit pairs (a vector)}
            \If { $\sum_j \mathbf G_j = 0$ }  \Comment{false positive, flip something}
                \State $\Delta^-[a',:] \leftarrow \Delta^-[a',:] + B[a',:] * B[b',:]$
                \State $\Delta^-[b',:] \leftarrow \Delta^-[b',:] + (1 - B[a',:]) * (1 - B[b',:])$
            \ElsIf { $\sum_j \mathbf G_j = 1$ }  \Comment{close to being wrong, so protect}
                \State $\Delta^-[a',:] \leftarrow \Delta^-[a',:] - \mathbf G$ \Comment{note only one element of $\mathbf G$ is 1}
                \State $\Delta^-[b',:] \leftarrow \Delta^-[b',:] - \mathbf G$
            \EndIf
        \EndFor
        \State \Return $\Delta := \alpha \Delta^+ + \beta \Delta^-$
    \end{algorithmic}
\end{algorithm}

\begin{algorithm}[htb]
    \caption{Training Algorithm}
    \label{alg:training}
    \begin{algorithmic}[1]
        \small
        \Require Word list $W = (w_1, \dots, w_n)$; Dimension $d$; Positive training set $P = \{ (a^i, b^i) \}_{i=1}^{m}$; validation sets $VP, VN$; gradient weights $\alpha,\beta$, learning params $r_\ell, b_\ell$, negative sample multiplier $n^-$ (must be even); maximum epochs $T$, early stop width $\omega$
        \State $B \leftarrow$ zero matrix of size $|W| \times d$
        \State $f1 \leftarrow$ empty list
        \State $(BestEmbedding, BestF1) \leftarrow (B, 0)$
        \For {$t=1$ to $T$}
            \State $N \leftarrow$ negative samples (Section \ref{sec:training-algorithm})
            \State $\Delta \leftarrow $ gradient from Algorithm \ref{alg:gradient}
            \State $X \leftarrow \max\left\{ 0, \frac 1 2 \tanh(2(r_\ell \Delta + b_\ell)) \right\}$ \Comment{flip probabilities}
            \State Flip each bit $B[w,j]$ with (independent) probability $X[w,j]$
            \State $f1 \leftarrow \text{f1-score}(\mathrm{Evaluate}(B, VP, VN))$
            \If{$f1 > BestF1$}
                \State $(BestEmbedding, BestF1) \leftarrow (B, f1)$
            \EndIf
            \State Append $f1$ to list $F1$
            \If{ $\mathrm{mean} (F1[t-2\omega+1...t-\omega]) \geq \mathrm{mean} (F1[t-\omega+1...t])$}
                \State Exit Loop \Comment{Early Exit Criterion if no improvement}
            \EndIf
        \EndFor
        \State \Return $BestEmbedding$
    \end{algorithmic}
\end{algorithm}

\subsection{Computational Complexity Analysis \label{sec:computational_complexity}}

During Gradient Computation we possibly update d-dimensional bit vectors of each vocabulary word along each dimension for a pair (positive or negative). So, for a vocabulary size of $n$, and list $P$ positive pairs, we randomly generate $n^- |P|$ negative samples (where $n^-$ is a hyper-parameter), and so Gradient Computation complexity is $O(nd \, (1+n^-)|P|)$. We repeat this process for each epoch, so for $T$ epochs, the overall algorithm computational complexity is linear in each variable: $O(ndTn^-|P|)$.

\section{Experimental Setup \label{sec:experimental_setup}}

\subsection{Edge distribution for all datasets \label{subsec:edge_distribution}}

For each dataset we show the direct and indirect edge counts in table \ref{tab:data_statistics} and edge distribution percentage in Figure \ref{fig:edge_distribution}. 

\begin{table}[t]
\small
  \caption{Edge distribution for all datasets}
  \vspace{-3mm}
  \label{tab:data_statistics}
  \centering
  \begin{tabular}{c|l|l|l}
    \toprule
     & \multicolumn{3}{c}{\textbf{Edge Counts}}  \\
    Dataset & Direct  & Indirect  & Full Transitive \\
     &  & (Transitive) & Closure \\
    &  &  & (Direct + Transitive) \\
    \midrule
    Medical & 2616 & 1692 & 4308 \\
    Music & 3920 & 2608 & 6528 \\
    Shwartz Lex & 5566 & 7940 & 13506 \\
    Shwartz Random & 13740 & 42437 & 56177 \\
    Animals & 4051 & 25744 & 29795 \\
    WordNet Nouns & 84363 & 576764 & 661127 \\
  \bottomrule
\end{tabular}
\vspace{-3mm}
\end{table}

\begin{figure}
    \centering
    \includegraphics[width=0.3\textwidth]{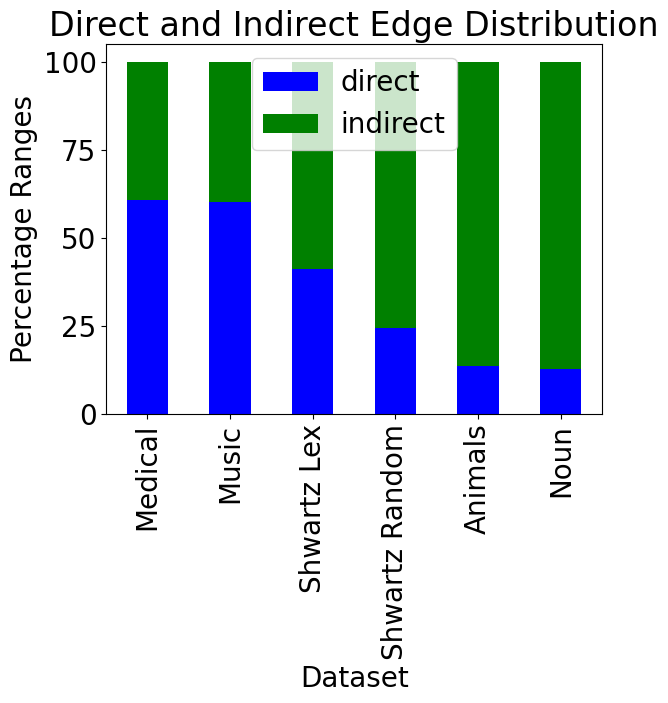}
    \vspace{-4mm}
    \caption{Distribution of Direct and Transitive (Indirect) edges for all datasets.}
    \label{fig:edge_distribution}
\end{figure}

\subsection{Hyperparameter Tuning \label{subsec:hyperparameter-tuning}}

For BINDER and the competing models, except Gumbel and T-Box, we exhaustively tuned dimensions $d={8, 32, 64, 128}$ while keeping patience at 500 (model will stop training if loss does not decrease in 500 iterations). Since Gumbel Box requires 2*d dimension and T-Box requires 4*d dimension for their representations, to be fair with other models, we tuned Gumbel Box for dimensions $d={4, 16, 32, 64}$ and T-Box for dimensions $d={2, 8, 16, 32}$ We report other hyperparameter ranges that we considered for tuning each model in table \ref{tab:hyperparameter_range} and the best hyperparameters for {\name } in table \ref{tab:best_hyperparameters}. All the models were run on a Tesla V100 GPU. 

\begin{table*}[t]
    \centering
    \caption{Hyper parameter range used for tuning each model. }

    \vspace{-3mm}
    \begin{tabular}{|c|l|}
        \toprule
        Model & Hyper parameter range \\
        \midrule
        Lorentzian distance & lorentzian $\alpha$: [0.01, 0.1]  \\
        Hyperbolic Entailment Cones & relation cone aperture scale: [0.01, 0.1, 1], eps bound: [0.01, 0.1] \\
        Box Embeddings & intersection temp: [0.01, 0.1, 1] and volume temp: [0.1, 1] \\
        BINDER & $\alpha$: [15, 20, 25, 30, 40, 100, 500, 5000, 25000, 50000] and $n^-$: [12, 32, 64, 128, 256, 512] \\
        \bottomrule
    \end{tabular}
    \label{tab:hyperparameter_range}
    \vspace{-1mm}
\end{table*}

\begin{table*}[t]
  \caption{Best hyper parameter configurations (dim,$\alpha$,$n^-$) for \name}
  \vspace{-3.5mm}
  \label{tab:best_hyperparameters}
  \centering
  \begin{tabular}{|c|l|l|l|l|l|l|}
    \toprule
    & \multicolumn{6}{c}{Dataset} \\
    Task & Medical & Music & Animals & Shwartz Lex & Shwartz Random & WordNet Nouns \\
    \midrule
    Representation & (128, 40, 512) & (128, 25, 128) & (128, 25, 256) & (128, 40, 512) & (128, 25, 512) & (128, 100, 256) \\
    TC 0\% & (128, 25000, 128) & (128, 25000, 128) & (128, 25000, 128) & (128, 25000, 128) & (128, 25000, 128) & (128, 25000, 128)\\
    TC 10\% & (128, 25000, 128) & (128, 10000, 128) & (128, 25000, 128) & (128, 25000, 128) & (128, 25000, 128) & (128, 25000, 128)\\
    TC 25\% & (128, 50000, 128) & (128, 25000, 128) & (128, 50000, 128) & (128, 50000, 128) & (128, 50000, 128) & (128, 50000, 12)\\
    TC 50\% & (128, 50000, 128) & (128, 50000, 128) & (128, 50000, 128) & (128, 50000, 128) & (128, 50000, 128) & (128, 50000, 12)\\
  \bottomrule
\end{tabular}
\vspace{-2.5mm}
\end{table*}

\subsection{Ablation Study\label{subsec:ablation}}
For an ablation study, we observed the effect on our model's $F1$-score for the test data by removing $\beta$ and $b_\ell$ separately while keeping dimension fixed at 128, and $\alpha$ = 25 for representation and $\alpha$ = 25000 for $0\%$ TC link prediction. The default values considered for $b_\ell$ and $\beta$ are 0.01 and 32 respectively. For the representation task on Animals dataset, setting $\beta = 0$ gave an $F1$-score of $44.07 \pm 1.77 \%$, demonstrating a significant drop from our experimental results, whereas setting $b_\ell = 0$ we did not see any significant effect on the $F1$-score ($96.46 \pm 1.34 \%$). For the $0\%$ TC prediction task on Animals dataset, when we set $\beta = 0$, we got an $F1$-score of $60.49 \pm 0.55 \%$ which was lower than our experimental results. Whereas setting $b_\ell=0$ we did not see any significant effect on the $F1$-score ($98.73 \pm 0.45$).

\subsection{Robustness of \name\label{subsec:robustness}}

\name\ has only a few hyper-parameters, which we tuned to determine the effect of each hyper parameter on model F1-score. In this experiment, we validate how the performance of {\name} is affected by the choice of hyper-parameter values. We show plots for hyper parameter tuning study on the \emph{Animals} dataset for representation and link prediction (0\% transitive) task in Figure \ref{fig:animals_robustness}. We fix $\beta$ at 10, learning rate $r_\ell$ at 0.008 and learning bias at 0.01 and tune dim, positive weight $\alpha$ and negative sample multiplier $n^-$. We ran each experiment for 2500 iterations.

\begin{figure}[ht]
    \centering
    Representation Experiment $F1$-score
    
    \includegraphics[width=0.46\textwidth]{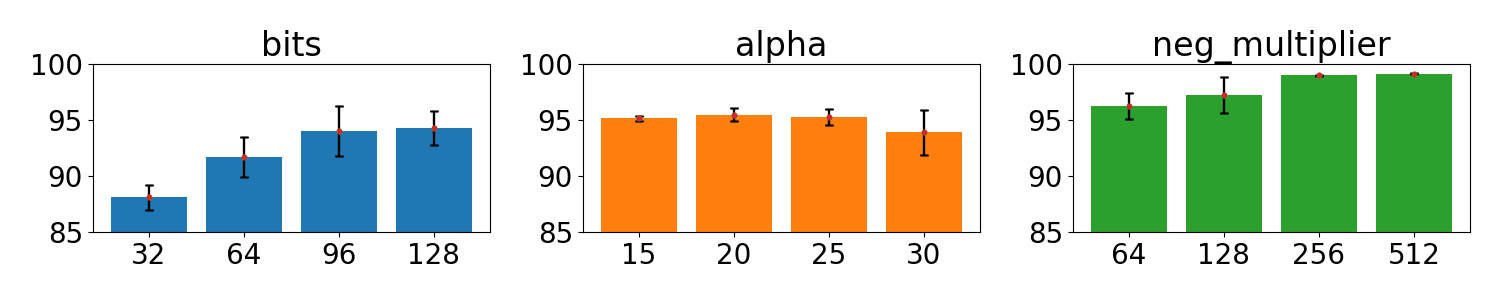}
    \vspace{1mm}

    0\% Transitive Closure Prediction Experiment $F1$-score
    \includegraphics[width=0.46\textwidth]{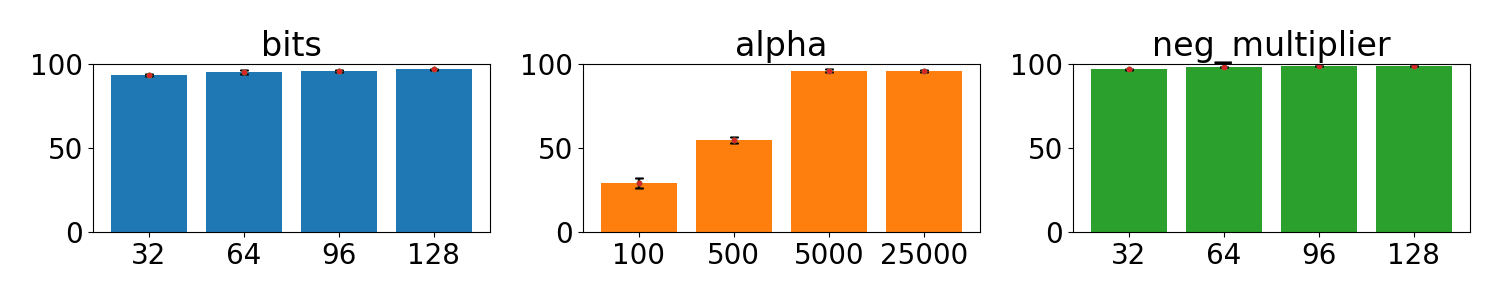}

    \vspace{-4mm}
    
    \caption{\name\ robustness experiment for Representation and 0\% Transitive Closure on Animals dataset.}
    \label{fig:animals_robustness}
\end{figure}

For representation task, to see the effect of dimension, we fix $\alpha$ at 30 and $n^-$ at 32 and change bits in the range $d \in \{32, 64, 96, 128\}$. We see an increasing trend in $F1$-score with optimum value at bits 128. This is intuitive as representation task benefits from large number of bits. To see the effect of $\alpha$ we fix bits at 128 and $n^-$ at 32 and change $\alpha \in \{15, 20, 25, 30\}$. We see $F1$-score reaches maximum at $\alpha=20$ and decreases after 25. For $n^-$, we fix bits at 128 and $\alpha$ at 25 and change $n^- \in \{64, 128, 256, 512\}$. As we increase $n^-$ we see increase in $F1$-score and also sharp decrease in error margin. Based on this experiment we decided on the optimal hyperparameter configuration to be (128, 25, 256).

For 0\% transitive closure task, we run similar experiements. For $\alpha = 25000$, $n^- = 32$, and $d \in \{32, 64, 96, 128\}$, we see a slightly increasing trend in $F1$-score with optimum value at 128 bits. So, transitive closure prediction, which depends on the generalization of the model, is not heavily influenced by increasing number of bits like the representation task.
For $d = 128$, $n^- = 32$, and $\alpha \in \{100, 500, 5000, 25000\}$, we see $F1$-score reaches maximum at $\alpha=25000$.
For $d = 128$, $\alpha = 25000$, and $n^- \in \{32, 64, 96, 128\}$, we see a very slight increase in $F1$-score as we increase $n^-$, with almost 100\% $F1$-score at $n^-=128$. Based on this experiment we decided on the optimal hyperparameters to be (128, 25000, 128). Similarly, we performed hyperparameter tuning for all datasets and all tasks and reported the best hyper parameters in Table \ref{tab:best_hyperparameters} in Appendix \ref{subsec:hyperparameter-tuning}.

\subsection{\name\ Model Convergence Results \label{sec:convergence}}

We run our model with a large number of iterations to minimize loss and thus maximize the $F1$-score. We show train loss curves for both the representation and the $0\%$ TC link prediction tasks respectively in the first and second graphs in Figure~\ref{fig:nouns_convergence}. We see an initial spike (for representation task) and turbulence (for TC link prediction task); we attribute the spike to the way loss is calculated when embeddings are initialized as 0: $Loss^+ = 0$ because there are no $(0,1)$ bit pairs \emph{anywhere} in the embedding, so the loss comes entirely from $Loss^-$. The turbulence is likely due to the smaller size of the positive, and hence negative, train data in the 0\% TC link prediction task.

For representation task our model achieves 99\% validation $F1$-score in less than 50 iterations on the \emph{WordNet Nouns} dataset, as shown in the third graph in Figure~\ref{fig:nouns_convergence}. For TC link prediction task the model attains high $F1$-score very quickly, and it continues to improve steadily, as shown in the fourth graph in Figure~\ref{fig:nouns_convergence}. 

\begin{figure}[ht]
    \centering

    \includegraphics[width=0.23\textwidth]{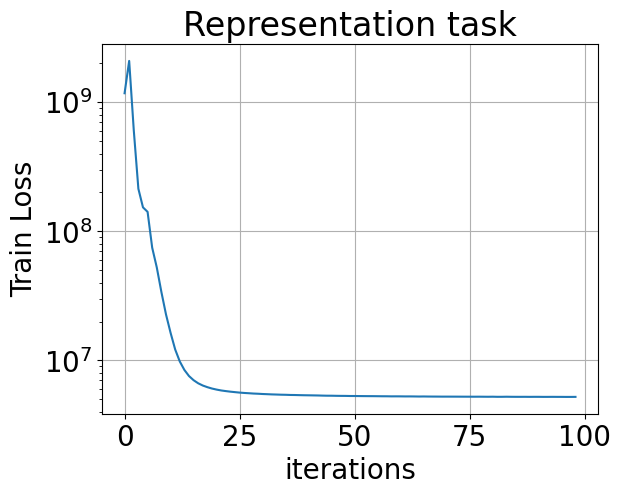}
    \includegraphics[width=0.23\textwidth]{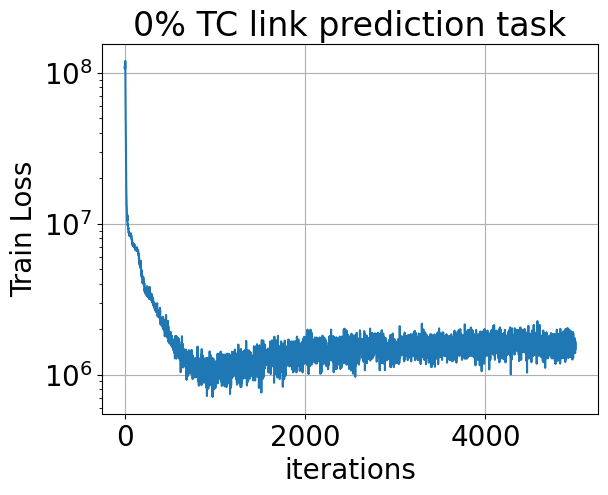}

    \includegraphics[width=0.23\textwidth]{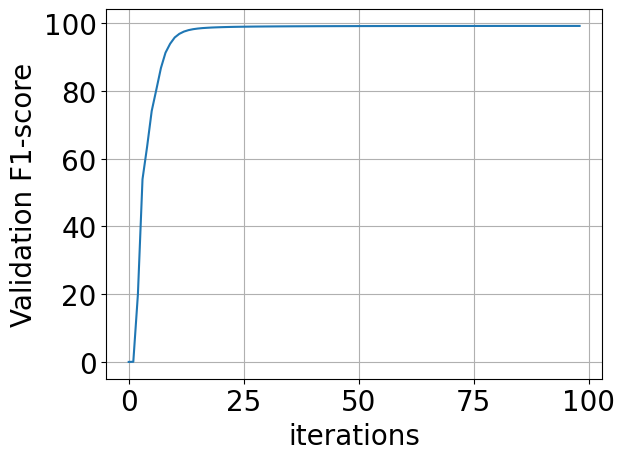}
    \includegraphics[width=0.23\textwidth]{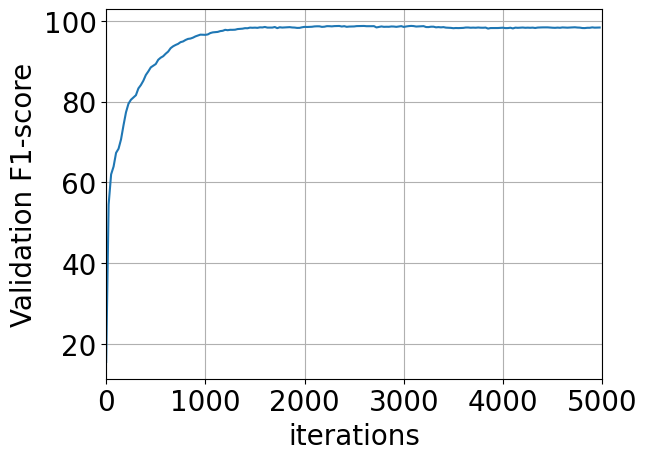}

    \caption{ Top-left figure: Train loss of the first 100 iterations for the representation task, Top-right figure: Train loss of the first 5000 iterations for the 0\% TC link prediction task,
    Bottom-left figure: Validation $F1$-score of the first 100 iterations for the representation task, Bottom-right figure: Validation $F1$-score of the first 5000 iterations for the 0\% TC link prediction task. All the experiments are performed on our largest \emph{WordNet Nouns} dataset.}
    \label{fig:nouns_convergence}
    \vspace{-1mm}
\end{figure}

\section{{\name} Binary Properties \label{sec:binary_properties}}
We claimed: ``Binary representation can immediately provide representation of concepts that can be obtained by logical operation over the given concept vectors''.
This claim is based on the intent-extent philosophy of Formal Concept Analysis (FCA). Formal concept analysis (FCA)~\cite{ganter_formal_1999}
provides a principled approach of deriving a concept hierarchy from a collection of objects and their attributes.
A formal context is a triple $(G,M,I)$, where $G$ is a set of objects, $M$ is a set of attributes, and $I \subset G \times M$ is a binary relation expressing which objects have which attributes.
Equivalently, a formal context can be represented as a $\{0,1\}$-matrix in which the rows corresponds to the objects and the columns corresponds to the attributes.

For the case of {\name}, we can think of the embeddings as creating the objects and attributes for FCA; each dimension of the embedding vectors is one attribute. For any given word $a$ with embedding vector $\mathbf{a}$, we can construct the Formal Concept $FC(a)$ whose \emph{objects} (extent) are $a$ and all its predicted hyponyms, and whose \emph{attributes} (intent) are the columns $j$ such that $\mathbf{a}_j = 1$.
The \emph{meet} of two concepts $FC(a)$ and $FC(b)$ contains all objects that are hyponyms of both $a$ and $b$, and all attributes possessed by \emph{at least one} of $a,b$; this corresponds to bitwise OR of embedding vectors. Dually, the \emph{join}'s intent is the set of attributes shared by \emph{both} words; this corresponds to bitwise AND.

For simplicity, we can identify corresponding words, embeddings, and formal concepts and simply define the \emph{meet} and \emph{join} of two words or two bit vectors to be another bit vector. Thus, the \emph{meet} operation on two individual entities corresponds to the binary OR operation on embeddings, while \emph{join} corresponds to binary AND.
While \emph{meet} is usually self-explanatory, the \emph{join} operation makes the most sense when used ``locally'', i.e. joining two similar objects gives a more meaningful result than joining unrelated concepts.

\begin{table}[h]
    \centering
    \begin{tabular}{l|c c c c c c}
    \toprule
        \textbf{Word} & \multicolumn{6}{c}{\textbf{Embedding}} \\
        \midrule
        flying       & 1 & 0 & 0 & 0 & 0 & 0 \\
        vehicle      & 0 & 0 & 1 & 0 & 0 & 0 \\
        airplane     & 1 & 1 & 1 & 0 & 0 & 0 \\
        helicopter   & 1 & 0 & 1 & 1 & 0 & 0 \\
        shoe         & 0 & 0 & 0 & 0 & 1 & 0 \\  
        mens-shoe    & 0 & 0 & 0 & 1 & 1 & 0 \\
        womens-shoe  & 0 & 0 & 0 & 0 & 1 & 1 \\
    \bottomrule
    \end{tabular}
    \caption{Example embedding of a small selection of instructive words. Other entities, like ``flying-vehicle'', can be inferred from these embeddings.}
    \label{tab:binder_example}
    \vspace{-5mm}
\end{table}
As an example, consider the embeddings in Table \ref{tab:binder_example}.  $\emph{meet}(\emph{flying,}$ $\emph{vehicle})$ is the vector $(1,0,1,0,0,0)$, which is easily interpreted as \emph{flying-vehicle}, a hypernym of \emph{airplane} and \emph{helicopter}.
It is harder to interpret $\emph{join}(\emph{flying}, \emph{vehicle})$, since the two words are rather unrelated; we get the zero vector, not an embedding of ``flying OR vehicle''. (In a more complete dataset, we might get something like ``thing that moves''.) A more sensible, ``local'' join operation is $\emph{join}(\emph{mens-shoe}, \emph{womens-shoe}) = \emph{shoe}$, since \emph{mens-shoe} and \emph{womens-shoe} are direct hyponyms of \emph{shoe}; we can see this in Table \ref{tab:binder_example} as $(0, 0, 0, 1, 1, 0) \cap (0, 0, 0, 0, 1, 1) = (0, 0, 0, 0, 1, 0)$.
{\name}, in the simplest way possible, allows one to infer entities like \emph{flying-vehicle} or confirm positioning of existing entities like \emph{shoe}.
Note that we can construct ``impossible'' meets like $\emph{meet}(shoe, vehicle)$, which results in the vector $\mathbf{z} = (0,0,1,0,1,0)$. The fact that $\mathbf{z}$ has no hyponyms indicates that nothing is both a \emph{vehicle} and a \emph{shoe}.

{\name} is, of course, not the only embedding method with intrinsic meet and join operations. Order Embeddings has very simple built-in \emph{meet}s and \emph{join}s, although {\name}'s bitwise AND/OR is arguably slightly simpler than OE's element-wise min/max. Box embeddings also support meets and joins, but the probabilistic nature makes the interpretation of the boxes more difficult. In contrast, Poincar\'e embeddings do not possess them at all, and in HEC, some pairs of cones fail to have meets and/or joins. In essence, we believe that bit vectors are \emph{the simplest possible} embeddings that naturally include the meet and join operations. {\name} embeddings can be directly interpreted, at least in theory, as sets of attributes or properties that must be inherited by all sub-concepts.

Of the three basic binary operations, {\name} handles one (AND) extremely well and can attach a lesser meaning to another (OR). In a very limited way, we can also consider the final operation, bitwise NOT. While the complement of a {\name} embedding is \emph{not} an embedding of its negation, we can still assign meaning to the complement vector. Consider the entity \emph{helicopter} from Table \ref{tab:binder_example}, embedded as $\mathbf{u} = (1,0,1,1,0,0)$.
Its complement is $1 - \mathbf{u} = (0,1,0,0,1,1)$, which has no hyponyms. What we \emph{can} say about $1 - \mathbf{u}$ is that no nonzero vector can possibly be a hypernym of both it and $\mathbf{u}$. Since $\emph{shoe} = (0,0,0,0,1,0)$, for instance, is a hypernym of $1 - \mathbf{u}$, we know immediately that \emph{helicopter} and \emph{shoe} have no \emph{attributes} in common and hence no common hypernyms (except the zero vector, which is a hypernym of everything and embeds the most generic concept, like ``entity'' or ``animal''). We can work similarly in the opposite direction; if $1 - \mathbf{u}$ did represent an entity (or had hyponyms), no hyponym of $1 - \mathbf{u}$ can be a hyponym of $\mathbf{u}$ except the bottom vector $(1,1,1,1,1,1)$. Since most natural hierarchies will have mutually exclusive concepts (like \emph{vehicle} and \emph{shoe}), the all-1 vector will not embed anything, and so in practice, $\mathbf{u}$ and $1-\mathbf{u}$ will have no \emph{objects} in common. This is not truly a ``negation'' concept, but it does have a useful meaning.

\end{document}